\documentclass{article}

\usepackage{microtype}
\usepackage{graphicx}
\usepackage{subcaption}
\usepackage{booktabs} 

\usepackage{hyperref}

\usepackage[accepted]{icml2026}

\usepackage{amsmath}
\usepackage{amssymb}
\usepackage{mathtools}
\usepackage{amsthm}

\usepackage[capitalize,noabbrev]{cleveref}

\theoremstyle{plain}
\newtheorem{theorem}{Theorem}[section]
\newtheorem{proposition}[theorem]{Proposition}

\theoremstyle{definition}

\theoremstyle{remark}

\usepackage[textsize=tiny]{todonotes}

\usepackage{multirow}
\usepackage{array}
\icmltitlerunning{Localizing Memorized Regions in Diffusion Models via Coordinate-Wise Curvature Differences}

\begin{document}

\twocolumn[
  \icmltitle{Localizing Memorized Regions in Diffusion Models via Coordinate-Wise Curvature Differences
}



  \icmlsetsymbol{equal}{*}
    \begin{icmlauthorlist}
        \icmlauthor{Gwangho Kim}{hanyang}
        \icmlauthor{Sungyoon Lee}{hanyang}
    \end{icmlauthorlist}
    
    \icmlaffiliation{hanyang}{Department of Computer Science, Hanyang University, Seoul, South Korea}
    
    \icmlcorrespondingauthor{Sungyoon Lee}{sungyoonlee@hanyang.ac.kr}





  \icmlkeywords{Machine Learning, ICML}

  \vskip 0.3in
]



\printAffiliationsAndNotice{}  

\begin{abstract}
Diffusion models can unintentionally memorize training samples, raising concerns about privacy and copyright. While recent methods can detect memorization, they often rely on global or model-specific signals and provide limited insight into where memorization appears within a generated image. We provide a geometric characterization of local memorization as a coordinate-wise variance collapse. However, such collapse can also arise from intrinsic data constraints rather than overfitting. To isolate overfitting-driven memorization, we propose curvature-difference methods that subtract the curvature of an underfitted baseline, either the unconditional model or a less-trained version of itself. We further derive a score-difference proxy that provides a geometric explanation for the widely used score-difference-based detection metric. Experiments on Stable Diffusion, evaluated against ground-truth memorization masks, show that our method outperforms the prior attention-based localization method. Code is available at \url{https://github.com/Gwangho99/mem-curv-diff}.
\end{abstract}

\section{Introduction}
\begin{figure*}[t]
    \centering
    \includegraphics[width=\textwidth]{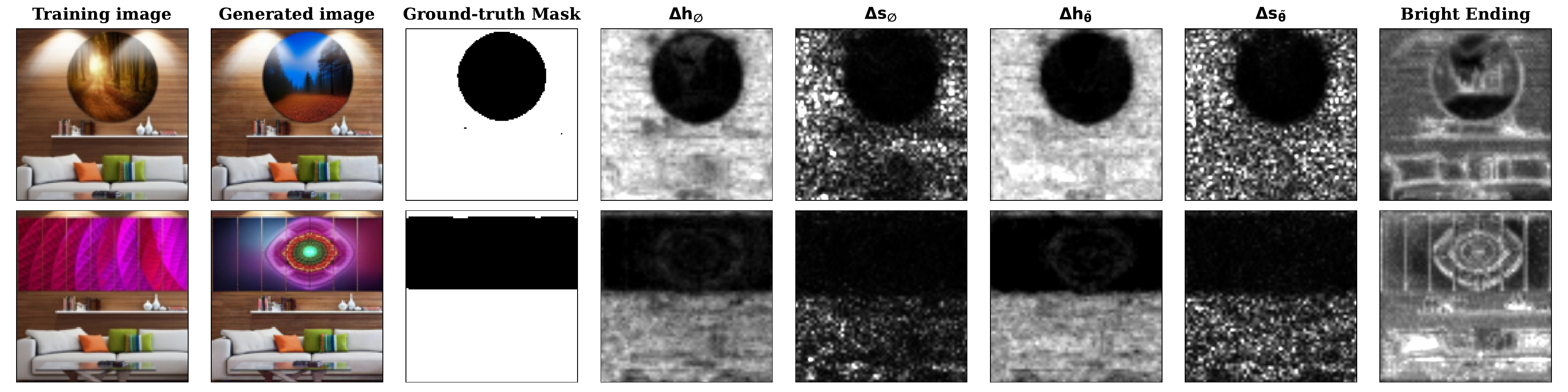}
    \caption{
    \textbf{Qualitative comparison of memorization localization.}
    For each example, we show the training image, the generated image, and the ground-truth memorization mask.
    We compare the proposed \emph{curvature-difference-based} metrics
    $\Delta h_{\varnothing}$ (Eq.~\ref{eq:dhu}) and $\Delta h_{\tilde{\theta}}$ (Eq.~\ref{eq:dhb}),
    together with their \emph{score-difference-based} surrogates
    $\Delta s_{\varnothing}$ (Eq.~\ref{eq:dsu}) and $\Delta s_{\tilde{\theta}}$ (Eq.~\ref{eq:dsb}),
    against the prior work, Bright Ending (BE) \citep{chen2025exploring}.
    Light regions indicate large values and hence high memorization scores.
    }
    \label{fig:qual}
\end{figure*}
Diffusion models \citep{sohl2015deep, ho2020denoising} have become the dominant paradigm for high-quality image generation. However, recent studies \citep{carlini2023extracting, somepalli2023diffusion} show that these models can memorize and reproduce training samples, raising serious privacy and copyright concerns. Importantly, memorization need not occur at the level of entire images; even partial reuse of training content can suffice to violate privacy or copyright constraints \citep{somepalli2023diffusion, webster2023reproducible, chen2025exploring}.

Most prior works characterize memorization using global signals that assign a single scalar value to each generated sample. \citet{wen2024detecting} introduced a score-difference-based detection metric that does not require retrieval over the training set, based on the intuition that memorized samples exhibit abnormally strong
dependence on text conditioning. 
Recent works have sought to interpret such global detection signals through a geometric lens.
\citet{ross2025a} proposed a geometric framework,
linking memorization to regions of low Local Intrinsic Dimensionality (LID),
and \citet{jeon2025understanding} further characterized memorization as sharpness of the learned probability landscape, interpreting the score-difference metric as a measure of the sharpness gap between conditional and unconditional models. These geometric quantities provide principled global characterizations of memorization as low dimensionality or high sharpness. However, they remain inherently global: they quantify how many degrees of freedom collapse, but provide no insight into where this collapse occurs within an image.

Motivated by this limitation, \citet{chen2025exploring} argued that memorization is often local, affecting only specific regions rather than entire images. They further emphasized that even partial replication of training data can pose privacy and copyright risks, while global image-level similarity scores are easily diluted by variations in non-memorized regions.
Based on this insight, they proposed Bright Ending (BE), which relies on cross-attention maps to extract spatial memorization masks.

This reveals two fundamental gaps in existing approaches. First, regarding spatial localization, current geometric methods provide only global characterizations, while BE relies on model-specific internal signals. A principled and model-agnostic geometric method for spatially localizing memorization is still missing. Second, regarding the underlying mechanisms of existing metrics, while recent works characterize the widely used score-difference detection metric \citep{wen2024detecting} as a sharpness gap, it remains fundamentally unclear why the gap itself is the decisive signal and why the unconditional model serves as the appropriate reference.

We bridge these gaps by extending the geometric framework of \citet{ross2025a}. Specifically, we characterize local memorization as a coordinate-wise variance collapse, which manifests geometrically as high curvature in the log-density.
However, high curvature may also arise from intrinsically low-variance regions of the ground-truth data manifold and therefore does not, by itself, imply \emph{undesirable} memorization. To isolate such effects, we introduce the curvature-difference-based method, which subtracts curvature estimated from an underfitted baseline—either an unconditional model or a less-trained conditional model. This removes intrinsic data structure driven curvature and highlights regions where variance collapses due to overfitting.

Finally, we show that the widely used score-difference-based metric \citep{wen2024detecting} can be interpreted as an approximation to a curvature difference, providing a novel interpretation of existing heuristics and unifying them within our framework.

Our contributions can be summarized as follows:
\begin{itemize}
    \item In Sections~\ref{sec:coord} and~\ref{sec:curvature}, we move beyond global metrics such as LID \citep{ross2025a} and sharpness \citep{jeon2025understanding}, which only quantify the \emph{total degrees of freedom}, and instead characterize memorization by revealing \emph{where and along which directions} variance collapses via coordinate-wise analysis.
    
    \item In Section~\ref{sec:iso}, we aim to localize memorization by removing curvature that arises from naturally low-variance regions of the data manifold, and introduce the curvature-difference-based method to isolate overfitting-driven memorization.
    
    \item In Section~\ref{sec:approx}, we show that curvature-difference can be approximated using score differences, and further provide an alternative geometric interpretation of the existing score-difference-based metric \citep{wen2024detecting} through this lens.
    
    \item In Section~\ref{sec:exp}, we empirically demonstrate improved memorization localization on Stable Diffusion using ground-truth masks, outperforming the prior attention-based method \citep{chen2025exploring}.
\end{itemize}
\section{Related Work}
\label{sec:related_work}
Early research focused on identifying memorization by exhaustively searching the entire training set.
\citet{carlini2023extracting} employed a calibrated $\ell_2$ distance, while \citet{somepalli2023diffusion} used feature-space similarity metrics such as SSCD \citep{pizzi2022self}.
However, these retrieval-based approaches are computationally prohibitive for large-scale models and datasets.
To address scalability, \citet{wen2024detecting} introduced a score-difference-based detection metric that does not rely on external data retrieval, based on the intuition that memorized samples exhibit abnormally strong dependence on text conditioning.
\citet{ren2024unveiling} leveraged cross-attention patterns for memorization detection and mitigation.

\citet{webster2023reproducible} first distinguished \emph{template verbatims}, which copy fixed spatial patterns with non-semantic variations, from \emph{exact verbatims}, where the generated image is nearly identical to a training sample. Motivated by this distinction, \citet{chen2025exploring} explicitly argued for the need to analyze \emph{local} memorization and proposed Bright Ending as a spatial localization mechanism based on cross-attention at the final denoising step. However, BE is inherently model-specific and, as shown in Figure~\ref{fig:qual}, tends to produce false positives in non-memorized regions.

Several recent studies have attempted to interpret the score-difference-based metric \citep{wen2024detecting} from a geometric perspective.
\citet{ross2025a} proposed the Manifold Memorization Hypothesis (MMH), linking memorization to regions of low Local Intrinsic Dimensionality (LID).
\citet{jeon2025understanding} further characterized memorization as sharpness in the probability landscape, arguing that Wen’s metric effectively measures the sharpness difference between conditional and unconditional models.
While \citet{jeon2025understanding} interpret the metric as a sharpness gap, it remains unclear why the \textbf{gap itself} is the decisive signal and why the \textbf{unconditional model} is the appropriate reference.

We interpret the unconditional model as an underfitted baseline that captures the intrinsic curvature of the data manifold.
From this viewpoint, the score difference can be understood as suppressing data-driven curvature and highlighting Overfitting-Driven Memorization (OD-Mem) \citep{ross2025a}, offering an alternative explanation for the effectiveness of score-difference-based detection.
\footnote{See Appendix~\ref{app:miti} for extended related work on memorization mitigation.}
\section{Preliminaries}
\label{sec:preliminaries}

\subsection{Diffusion Models}
Denoising diffusion models \cite{ho2020denoising, sohl2015deep, song2021scorebased} are a class of generative models that generate an image from $p_0(x)$ by gradually denoising data from pure Gaussian noise through a learned reverse diffusion process. At each timestep $t$, the forward process is given by:
\begin{equation*}
    q(x_t | x_0) = \mathcal{N}(x_t; \sqrt{\bar{\alpha}_t} x_0, (1 - \bar{\alpha}_t)I),
\end{equation*}
where $\alpha_t = 1 - \beta_t$ and $\bar{\alpha}_t = \prod_{s=1}^t \alpha_s$ and $\beta_t$ is the diffusion noise schedule at timestep $t$.
In the continuous-time formulation \citep{song2021scorebased}, the forward diffusion process is defined by the forward stochastic differential equation (SDE):
\begin{equation*}
\mathrm{d}x_t = f(x_t, t)\,\mathrm{d}t + g(t)\,\mathrm{d}w_t,
\end{equation*}
where $w_t$ denotes standard Brownian motion. This process gradually perturbs data samples into Gaussian noise.
Sampling is performed by simulating the reverse-time SDE
\[
\mathrm{d}x_t = \big[f(x_t,t) - g(t)^2 \nabla_{x_t} \log p_t(x_t)\big]\,\mathrm{d}t + g(t)\,\mathrm{d}\bar{w}_t,
\]
where $\bar{w}_t$ denotes Brownian motion in reverse time. In practice, $\nabla_{x_t} \log p_t(x_t)$ is replaced by its neural approximation $s_\theta(x_t, t)$.

\subsection{Geometric Framework for Memorization}
\label{sec:geom}
In this section, we review prior work \citep{ross2025a} that characterizes memorization as a form of low dimensionality in the learned data manifold, which serves as the foundation for our approach.

\citet{ross2025a} introduced the \emph{Manifold Memorization Hypothesis} (MMH), a geometric framework that explains memorization in deep generative models through the lens of the manifold hypothesis \citep{bengio2013representation, loaiza2024deep}. The MMH posits that memorization occurs at a point $x$ when the manifold learned by the model, $\mathcal{M}_{\theta}$, has an intrinsic dimensionality that is significantly lower than expected or fails to match the ground truth manifold $\mathcal{M}_*$.

The framework formalizes this using the \emph{Local Intrinsic Dimension} (LID), denoted as $LID(x)$, which represents the number of degrees of freedom or valid independent directions of movement at a specific point $x$. Based on the relationship between the ground truth $LID_*$ and the model's learned $LID_\theta$, \citet{ross2025a} categorized memorization into two distinct types:
\begin{itemize}
    \item \textbf{Overfitting-driven Memorization (OD-Mem):} This is characterized as a modeling failure where the model fails to generalize correctly to the ground-truth distribution ($LID_{\theta}(x) < LID_*(x)$). In this scenario, $\mathcal{M}_{\theta}$ is too constrained compared to the idealized ground truth manifold $\mathcal{M}_*$, meaning the model lacks the necessary degrees of freedom to capture the full complexity of the ground-truth data manifold.
    \item \textbf{Data-driven Memorization (DD-Mem):} This occurs when the model successfully captures the ground truth distribution ($LID_{\theta}(x) \approx LID_*(x)$), but the ground truth manifold itself possesses a small local intrinsic dimension ($LID_*$) at the given point. Rather than being a modeling failure, DD-Mem is an inherent consequence of the properties of $p_*(x)$, such as low data complexity that significantly reduces the available degrees of freedom within the data manifold.
\end{itemize}
However, we do not treat DD-Mem as undesirable memorization, limiting its definition to the intrinsic low-variance structure of the data manifold, as we detail in Section~\ref{sec:iso}. While \citet{ross2025a} include data duplication under DD-Mem, we exclude such sampling-bias-induced regularity from our definition of intrinsic data properties.
\paragraph{Notations.}
Let $x_t \in \mathbb{R}^d$ denote the diffusion variable at timestep $t$. We denote the unconditional and conditional model distributions by $p_\theta(x_t)$ and $p_\theta(x_t \mid c)$, respectively. The neural network is trained to approximate the corresponding score functions,
\[
s_\theta(x_t) \approx \nabla_{x_t} \log p(x_t), \quad
s_\theta(x_t, c) \approx \nabla_{x_t} \log p(x_t \mid c),
\]
via denoising score matching.

We define the Hessian of the log-density induced by the model as
\[
H_\theta(x_t) := \nabla_{x_t} s_\theta(x_t), \quad
H_\theta(x_t, c) := \nabla_{x_t} s_\theta(x_t, c),
\]
which serves as a second-order approximation to the curvature of $\log p_\theta$.
\section{Localizing Memorized Regions via Curvature Difference}
\label{sec:main}
In this section, we present a geometric framework to explicitly localize memorized regions within generated samples. Section~\ref{sec:coord} characterizes local memorization as a coordinate-wise collapse of variability, distinguishing it from global dimensionality measures. Section~\ref{sec:curvature} establishes the theoretical link between this variance collapse and the curvature of the log-density. Section~\ref{sec:iso} introduces a difference-based metric that subtracts the curvature of an underfitted baseline to isolate overfitting-driven memorization from intrinsic data constraints. 
Finally, Section~\ref{sec:approx} derives a score-difference-based proxy for this curvature metric, offering a novel geometric interpretation of the widely used detection metric \citep{wen2024detecting}.
\subsection{Coordinate-Wise Variance and Memorization}
\label{sec:coord}
As discussed in Section~\ref{sec:geom}, memorized samples, often characterized by low local intrinsic dimensionality (LID), are commonly understood as having fewer effective degrees of freedom than non-memorized samples \citep{ross2025a}. While such global geometric quantities provide useful signals, we argue that they are insufficient to fully characterize local memorization \citep{chen2025exploring} or template verbatim \citep{webster2023reproducible}. In particular, they fail to capture how the available degrees of freedom are distributed across the data space---a limitation we address by characterizing memorization through coordinate-wise variance.

To illustrate this limitation, consider a synthetic example in $\mathbb{R}^4$ where two different distributions share the same intrinsic dimensionality ($\mathrm{LID}=2$), as shown in Figure~\ref{fig:lid}. In Figure~\ref{fig:lid}a, variability is distributed across all coordinates, resulting in relatively uniform \emph{coordinate-wise} covariance. In contrast, in Figure~\ref{fig:lid}b, the same two degrees of freedom are concentrated on a specific subset of coordinates, while the remaining coordinates are fixed. Although both distributions have identical intrinsic dimensionality, the latter leaves the two pixels fixed, providing a strong signal of memorization for those specific pixels.

This distinction naturally scales to high-dimensional image spaces. Consider a generative model producing $64 \times 64$ images. Two samples, $x,y \in \mathbb{R}^{4096}$, may both have $\mathrm{LID}=64$, yet differ fundamentally in how variability is expressed:
\begin{itemize}
    \item For sample $x$, the 64 degrees of freedom are distributed globally across the image, manifesting as coherent changes in object shape, pose, or illumination.
    \item For sample $y$, the same 64 degrees of freedom are localized within a single $8 \times 8$ patch, while the remaining $4096 - 64$ pixels are effectively fixed and closely match a specific region of some training images.
\end{itemize}

Despite identical global intrinsic dimensionality, these samples have qualitatively different semantic interpretations. The former reflects meaningful, concept-level variability, whereas the latter corresponds to a template-based reproduction with limited spatial support. This difference cannot be captured by global measures such as LID or sharpness alone, as they quantify the \emph{number} of degrees of freedom but not \emph{where} those degrees of freedom are expressed.

We refer to the latter phenomenon as \emph{verbatim memorization}—encompassing both exact and template verbatims following \citet{webster2023reproducible}—and distinguish it from the former, which we term \emph{concept memorization}. 
This distinction highlights a fundamental limitation of existing geometric characterizations: while they effectively measure overall dimensionality or sensitivity, they are inherently blind to both \emph{where} memorization occurs and \emph{what} is memorized.

The primary focus of this paper is to characterize verbatim memorization, specifically \emph{template verbatim} (or equivalently, \emph{local memorization}), by explicitly identifying where memorization occurs within a generated sample, that is, which coordinates exhibit a collapse of variability.

\subsection{Linking Variance to Curvature}
\label{sec:curvature}
\begin{figure}[t]
  \centering
  \begin{subfigure}[t]{0.48\linewidth}
    \centering
    \includegraphics[width=\linewidth]{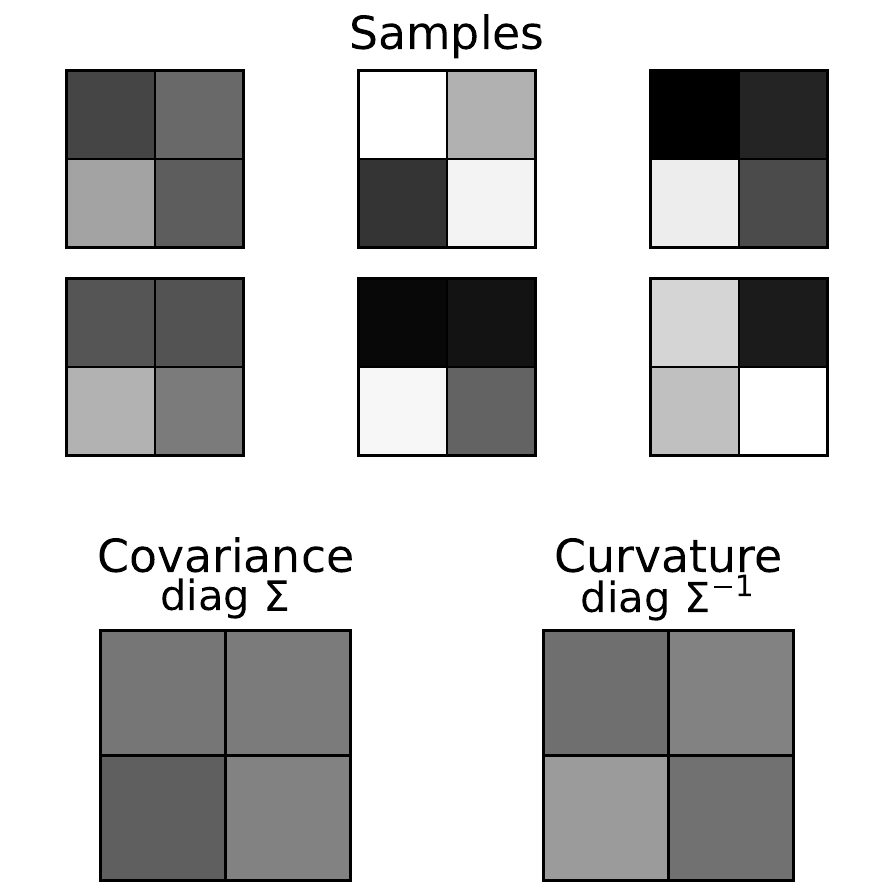}
    \caption{concept memorization}
    \label{fig:lid:a}
  \end{subfigure}
\hfill
  \begin{subfigure}[t]{0.48\linewidth}
    \centering
    \includegraphics[width=\linewidth]{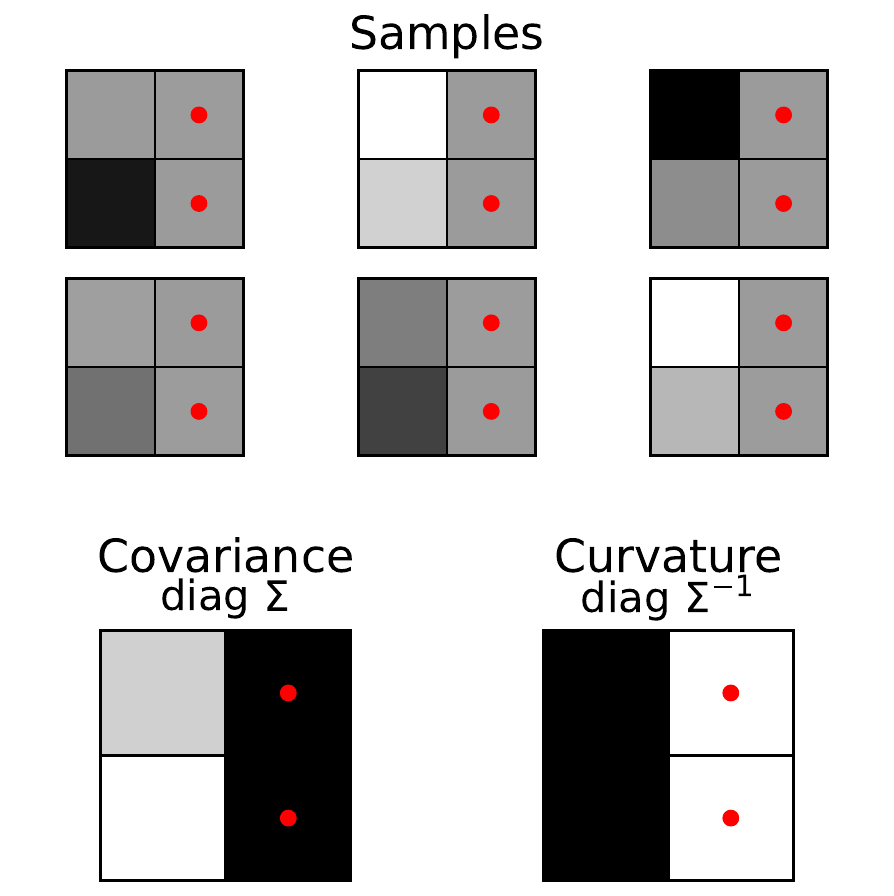}
    \caption{verbatim memorization}
    \label{fig:lid:b}
  \end{subfigure}
\caption{
Samples and coordinate-wise curvature of linear Gaussian models $x=Az+\varepsilon$ with $z\sim\mathcal N(0,I_2)$ and $\varepsilon\sim\mathcal N(0,\sigma^2 I_4)$ ($\sigma \ll 1$).
Both constructions have $\mathrm{rank}(A)=2$ and identical Frobenius norm $\|A\|_F$, and thus share the same intrinsic dimensionality.
\textbf{Top}: samples reshaped as $2\times2$ images.
\textbf{Bottom}: coordinate-wise curvature
$-\mathrm{diag}(\nabla_x^2\log p(x))=\mathrm{diag}((AA^\top+\sigma^2I)^{-1})$.
In \textbf{(a)}, $A$ has dense rows so that variability is distributed across all coordinate directions,
whereas in \textbf{(b)}, $A$ contains two zero rows so that $x$ has near-zero variance in the two coordinate directions. We mark the near-zero variance coordinates in red. Light colors indicate higher values.
}
  \label{fig:lid}
\end{figure}
In the previous section, we argued that local memorization is characterized by a coordinate-wise collapse of variability rather than a global reduction in degrees of freedom. Motivated by the geometric interpretation of sharpness and curvature in \citet{jeon2025understanding}, we now link this phenomenon to the curvature of the learned data distribution.

As illustrated in Figure~\ref{fig:lid}, for Gaussian distributions, the curvature of the log-density, given by $-\nabla_x^2 \log p(x)$, is directly related to the inverse covariance matrix. 
Coordinates with small variance correspond to directions of high curvature, whereas directions with large variability exhibit relatively flat curvature. 
This observation naturally bridges the covariance-based intuition developed in the previous section with a curvature-based characterization of local memorization. 
Furthermore, in diffusion models, the covariance structure of the final sample $x_0$ is reflected in the curvature of the model distribution at intermediate noisy states $x_t$. This relationship is formalized by the following proposition.

\begin{proposition}
\label{lem:cov}
Let the forward transition kernel be
\[
p(x_t \mid x_0)
=
\mathcal N\!\left(\alpha_t\,x_0,\,\sigma_t^2 I\right),
\]
and define the marginal
\[
p(x_t)=\int p(x_t\mid x_0)\,p(x_0)\,dx_0 .
\]
Then the conditional covariance satisfies
\[
\mathrm{Cov}[x_0\mid x_t]
=
\frac{1}{\alpha_t^2}
\left(
\sigma_t^4 \nabla_{x_t}^2\log p(x_t)
+
\sigma_t^2 I
\right).
\]
\end{proposition}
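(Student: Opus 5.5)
This is the second-order Tweedie identity, and I would prove it by differentiating the marginal $p(x_t)$ under the integral sign twice. For fixed $x_t$ the posterior is
\[
p(x_0\mid x_t)=\frac{p(x_t\mid x_0)\,p(x_0)}{p(x_t)} .
\]
The Gaussian kernel gives
\[
\nabla_{x_t}\log p(x_t\mid x_0)=-\frac{x_t-\alpha_t x_0}{\sigma_t^2},
\qquad
\nabla_{x_t}^2\log p(x_t\mid x_0)=-\frac{1}{\sigma_t^2}I .
\]
Both are affine in $x_0$, and this is the structure the whole computation relies on.

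\textbf{First step (Tweedie's formula).} Differentiating $p(x_t)=\int p(x_t\mid x_0)p(x_0)\,dx_0$ and dividing by $p(x_t)$ gives
\[
\nabla_{x_t}\log p(x_t)=\mathbb E\!\left[-\frac{x_t-\alpha_t x_0}{\sigma_t^2}\,\Big|\,x_t\right]=\frac{\alpha_t\,\mathbb E[x_0\mid x_t]-x_t}{\sigma_t^2}.
\]

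\textbf{Second step.} I would differentiate once more. The cleanest route is the general identity, valid for any mixture,
\[
\nabla^2\log p(x_t)=\mathbb E\!\left[\nabla^2\log p(x_t\mid x_0)\mid x_t\right]+\mathrm{Cov}\!\left[\nabla\log p(x_t\mid x_0)\mid x_t\right].
\]
It follows from $\nabla^2 p/p=\mathbb E[\nabla^2\log k+\nabla\log k\,\nabla\log k^\top\mid x_t]$, with $k=p(x_t\mid x_0)$, together with $\nabla^2\log p=\nabla^2p/p-\nabla\log p\,\nabla\log p^\top$. Plugging in the Gaussian kernel, the first term is $-\sigma_t^{-2}I$. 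In the second term $x_t$ is fixed under the conditioning, so
\[
\mathrm{Cov}\!\left[\tfrac{\alpha_t x_0-x_t}{\sigma_t^2}\,\Big|\,x_t\right]=\frac{\alpha_t^2}{\sigma_t^4}\,\mathrm{Cov}[x_0\mid x_t].
\]
Hence
\[
\nabla^2\log p(x_t)=-\frac{1}{\sigma_t^2}I+\frac{\alpha_t^2}{\sigma_t^4}\,\mathrm{Cov}[x_0\mid x_t].
\]
Multiplying by $\sigma_t^4/\alpha_t^2$ and rearranging yields exactly the claimed formula.

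\textbf{Main obstacle.} The only nontrivial point is justifying that differentiation and integration can be interchanged twice. I would handle this by dominated convergence. The kernel and its first two $x_t$-derivatives are a Gaussian times a polynomial in $(x_t-\alpha_t x_0)$. Locally uniformly in $x_t$, these are bounded by an integrable function of $x_0$ against any probability measure $p(x_0)$, because Gaussian decay dominates polynomial growth. The same bounds ensure that the posterior moments exist, so no assumption beyond $p(x_0)$ being a probability distribution, and $p(x_t)>0$ (automatic here), is needed. The rest is routine algebra.
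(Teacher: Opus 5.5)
Your proof is correct, but it is organized differently from the paper's. The paper also begins with Tweedie's first-moment formula. It then computes the Jacobian of the posterior mean in two ways:
\begin{itemize}
\item differentiating $\mathbb{E}[x_0\mid x_t]=\frac{1}{\alpha_t}(x_t+\sigma_t^2\nabla_{x_t}\log p(x_t))$ gives $\frac{1}{\alpha_t}(I+\sigma_t^2\nabla_{x_t}^2\log p(x_t))$;
\item differentiating $\int x_0\,p(x_0\mid x_t)\,dx_0$ under the integral via the log-derivative trick, with the posterior score $\nabla_{x_t}\log p(x_0\mid x_t)=\frac{\alpha_t}{\sigma_t^2}(x_0-\mathbb{E}[x_0\mid x_t])$, gives $\frac{\alpha_t}{\sigma_t^2}\mathrm{Cov}[x_0\mid x_t]$.
\end{itemize}
Equating the two yields the claim.

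You never differentiate the posterior mean. Instead you differentiate the marginal twice and use the general mixture identity $\nabla^2\log p=\mathbb{E}[\nabla^2\log k\mid x_t]+\mathrm{Cov}[\nabla\log k\mid x_t]$, which is a Louis-type (missing-information) decomposition.

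\textbf{What your route buys.} It is more general and more transparent. The identity holds for any smooth kernel, and the Gaussian enters only through two facts: $\nabla^2\log k$ is constant and $\nabla\log k$ is affine in $x_0$. Only the general form $\nabla\log p=\mathbb{E}[\nabla\log k\mid x_t]$ of your first step is actually needed, to turn the second moment into a covariance. Your route also simplifies the regularity check. The interchange of derivative and integral is needed only for $\int k\,p(x_0)\,dx_0$, whose $x_t$-derivatives are Gaussian-times-polynomial and hence globally bounded. You address this, whereas the paper performs its interchanges without comment, including one through the posterior, which carries the factor $1/p(x_t)$.

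\textbf{What the paper's route buys.} It produces a useful intermediate fact: the Jacobian of the denoiser $x_t\mapsto\mathbb{E}[x_0\mid x_t]$ equals $\frac{\alpha_t}{\sigma_t^2}\mathrm{Cov}[x_0\mid x_t]$. This is the form in which the result is usually read off for a learned denoiser or score network.

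One small point: your existence-of-moments remark tacitly uses $\alpha_t\neq 0$, which the statement already presupposes.
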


See Appendix~\ref{app:proof_cov} for a proof. This proposition shows that large values of $\big(-\nabla^2_{x_t} \log p(x_t)\big)_{ii}$ correspond to low conditional variance along coordinate $i$, indicating a collapse of variability and hence signals local memorization in that direction. 
Motivated by this result, we utilize the diagonal of the Hessian, $\mathrm{diag}(-H_\theta(x_t, c))$, as a metric to capture coordinate-wise variance collapse.

\subsection{Isolating Overfitting-Driven Memorization}
While the coordinate-wise curvature, $\mathrm{diag}(-H_\theta(x_t, c))$,
provides a natural extension of the Manifold Memorization Hypothesis (MMH),
we argue that this metric alone is insufficient for isolating undesirable
memorization. As shown in the second column of
Figure~\ref{fig:diff}, $\mathrm{diag}(-H_\theta(x_t, c))$ often exhibits high curvature
in regions that are intrinsically constrained by the data distribution,
reflecting generic structural simplicity rather than memorization.

More specifically, when a prompt explicitly specifies ``a black background'' (as in the ``Non Mem'' case in Figure~\ref{fig:diff}), the corresponding background pixels are naturally forced to take near-constant values across generations.
This lack of variability is not the result of overfitting to a particular training instance, but rather a direct consequence of semantic constraints imposed by the prompt and the underlying data manifold. 
Even in the ``Local Mem'' case, non-memorized regions may still exhibit high curvature due to their inherent data simplicity, as observed in generated samples. 

While such behavior corresponds to \emph{data-driven memorization
(DD-Mem)} within the MMH framework, treating DD-Mem as memorization---at least for the purpose of localizing unintended memorization---is conceptually misleading. High curvature induced by intrinsic data constraints does not indicate that the model has collapsed onto a specific training example, but instead reflects expected
structure of the data distribution. In this work, we therefore explicitly
exclude such intrinsic constraints and focus exclusively on isolating
\emph{overfitting-driven memorization (OD-Mem)}, where excessive curvature
arises from abnormal variance collapse induced by overfitting.
\label{sec:iso}
\begin{figure}[t]
    \centering
    \setlength{\unitlength}{\columnwidth}
\begin{picture}(1, 0.05) 
    \put(0.39, 0.01){\small $\mathrm{diag}\left(-H_\theta(x_t, c)\right)$}
    \put(0.80, 0.01){\small $\Delta h_\emptyset^t$}
\end{picture} 
    \includegraphics[width=\columnwidth]{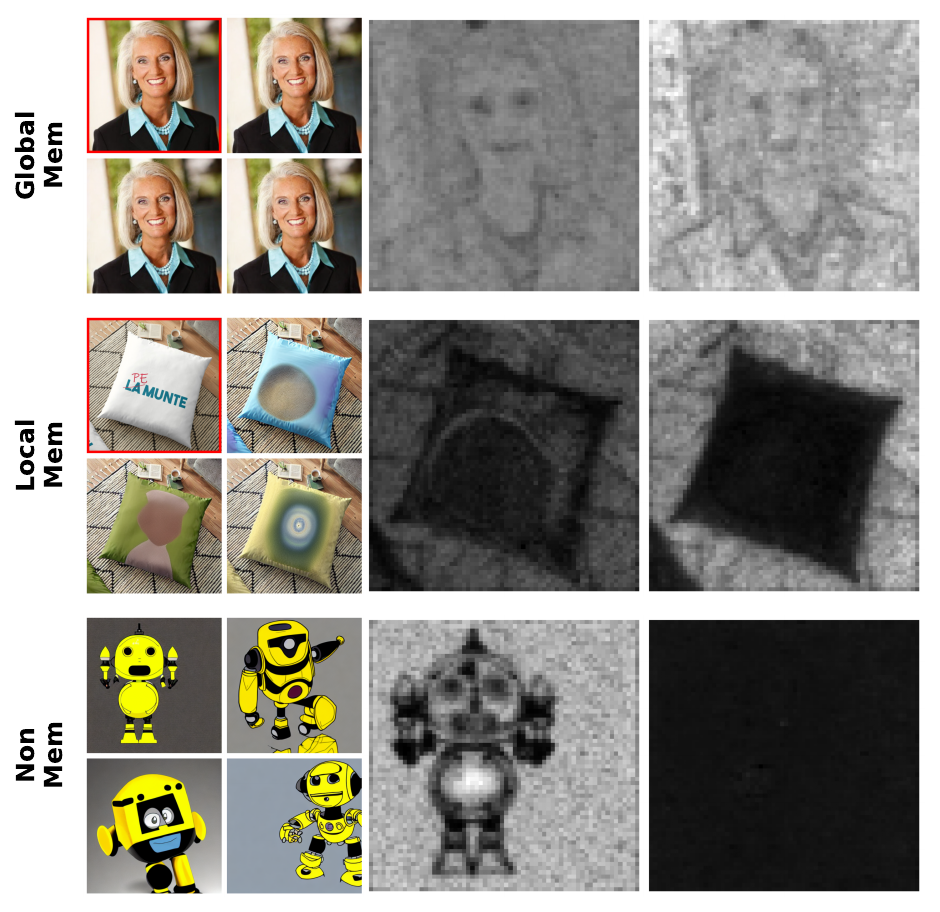}
    \caption{
    \textbf{Curvature-difference isolates overfitting-driven memorization.}
    Heatmaps within each column share the same color scale, with light and dark regions indicating high and low curvature (or differences), respectively.
    \textbf{Left:} Generated samples; training instances are highlighted with red borders.
    \textbf{Middle:} Coordinate-wise curvature $\mathrm{diag}\!\left(-H_\theta(x_t, c)\right)$
    computed at the final sampling step ($t \approx 0$) of the first generated sample in the top row. 
    \textbf{Right:} Proposed curvature-difference
    $\mathrm{diag}\!\left(-H_\theta(x_t, c)\right) - \mathrm{diag}\!\left(-H_\theta(x_t)\right)$,
    evaluated in the same way as Middle.
    By subtracting the unconditional baseline, this metric suppresses data-driven curvature and
    selectively highlights overfitting-driven memorization, yielding sharper spatial localization.
    }
    \label{fig:diff}
\end{figure}
To achieve this, we propose subtracting the curvature of a relatively \emph{underfitted} baseline model from that of the original model. 

We first consider the unconditional model as a baseline.
However, as recently analyzed by \citet{karras2024guiding}, the unconditional denoiser can be interpreted as a relatively underfitted model, not merely due to the absence of conditioning, but because it must represent the entire data distribution at once, whereas the conditional model only needs to fit a single class or prompt for each sample. This mismatch in task complexity leads the unconditional model to attain a looser fit to the data.
Motivated by this view, we further consider a less-trained version of the model as an alternative underfitted baseline.
We define the two metrics based on curvature difference as:
\begin{equation}
\label{eq:dhu}
\Delta h_{\emptyset}^t \coloneqq \mathrm{diag}(-H_{\theta}(x_t,c)) - \mathrm{diag}(-H_{\theta}(x_t)),
\end{equation}
\begin{equation}
\label{eq:dhb}
\Delta h_{\tilde{\theta}}^t \coloneqq \mathrm{diag}(-H_{\theta}(x_t,c)) - \mathrm{diag}(-H_{\tilde{\theta}}(x_t,c)),
\end{equation}
where $\tilde{\theta}$ refers to a less-trained version of the model that captures the general data manifold while being relatively less overfitted to specific training instances.
\footnote{Note that $\Delta h_{\emptyset}$ offers a more practical implementation using the unconditional model already utilized for CFG; however, we also utilize $\Delta h_{\tilde{\theta}}$ as an alternative to further justify our framework.}

By removing the baseline curvature induced by intrinsic data simplicity, this metric highlights regions where the $\mathrm{diag}(-H_{\theta}(x_t,c))$ significantly exceeds what is expected from the data manifold alone. As demonstrated in the third column of Figure~\ref{fig:diff}, this approach yields substantially clearer and more precise localization of memorized content compared to raw curvature alone. 

In practice, explicitly forming the full Hessian is intractable in high-dimensional settings. Instead, we efficiently approximate the diagonal of the Hessian differences using the Hutchinson estimator~\citep{hutchinson1989stochastic}. This estimator requires only Hessian–vector products, which can be computed efficiently via automatic differentiation without explicitly forming the full Hessian. Details are provided in Appendix~\ref{appendix:hutchinson}.
\begin{table*}[t]
\caption{Quantitative evaluation of memorization localization. We report Intersection-over-Union (IoU) and Pixel Accuracy (ACC). While `TV only' focuses on Template Verbatims, the `All' setting evaluates the metrics on the complete dataset, which includes Template Verbatims, Global Memorization cases, and Non-memorized samples. For SD v2.1, we report `TV + Non-mem' due to the absence of Global Memorization cases in the dataset.
}
\label{tab:gt_mask_eval}
\centering
\begin{tabular}{l|cc|cc|cc|cc}
\toprule
 & \multicolumn{4}{c|}{\textbf{SD v1.4}} & \multicolumn{4}{c}{\textbf{SD v2.1}} \\
 \cline{2-9}
 & \multicolumn{2}{c|}{\textbf{TV only}}
 & \multicolumn{2}{c|}{\textbf{All}}
 & \multicolumn{2}{c|}{\textbf{TV only}}
 & \multicolumn{2}{c}{\textbf{TV + Non-mem}} \\
\hline
Method
& IoU & ACC & IoU & ACC& IoU & ACC & IoU & ACC \\
\hline
All-ones                            
& 0.560 & 0.560 & 0.522 & 0.522 & 0.649 & 0.649 & 0.325 & 0.325 \\
All-zeros
& 0.000 & 0.440 & 0.333 & 0.478 & 0.000 & 0.351 & 0.500 & 0.675 \\
BE \citep{chen2025exploring}        
& 0.751 & 0.805 & 0.564 & 0.849 & 0.933 & 0.957 & \textbf{0.956} & 0.978 \\

$\mathrm{diag}(-H_\theta(x_t, c))$  
& 0.586 & 0.600 & 0.522 & 0.696 & 0.649 & 0.649 & 0.500 & 0.675 \\
\hline
$\Delta h_{\emptyset}$ ({\small Eq.~\ref{eq:dhu}})
& 0.899 & 0.940 & \textbf{0.953} & \textbf{0.968} & 0.943 & 0.964 & 0.866 & 0.980 \\

$\Delta s_{\emptyset}$  ({\small Eq.~\ref{eq:dsu}})
& 0.830 & 0.896 & 0.918 & 0.951 & 0.785 & 0.840 & 0.794 & 0.919 \\

$\Delta h_{\tilde{\theta}}$ ({\small Eq.~\ref{eq:dhb}})
& \textbf{0.921} & \textbf{0.952} & 0.867 & 0.966 & \textbf{0.947} & \textbf{0.967} & 0.828 & \textbf{0.982} \\

$\Delta s_{\tilde{\theta}}$ ({\small Eq.~\ref{eq:dsb}})
& 0.863 & 0.917 & 0.654 & 0.904 & 0.920 & 0.947 & 0.844 & 0.973 \\
\bottomrule
\end{tabular}
\end{table*}

\subsection{A Novel Geometric Interpretation of Wen's Metric}
\label{sec:approx}
We provide a link between the curvature-difference framework introduced in the previous section and \textit{score difference} by exploiting a Fisher-information-type identity \citep{lehmann1998theory}. Through this connection, we offer a novel geometric interpretation of the widely used score-difference-based metric proposed by \citet{wen2024detecting}:
\begin{equation}
    \left\| s_\theta(x_t, c) - s_\theta(x_t) \right\|_2.
    \label{eq:wen}
\end{equation}
Also, while Hutchinson-based estimators already provide an efficient approach to estimating curvature, utilizing score differences as a proxy further improves efficiency by entirely avoiding Hessian computations.
Note that our primary interest is $\nabla_{x_t}\log p(x_t|c) - \nabla_{x_t} \log p(x_t) = \nabla_{x_t} \log p(c|x_t)$.

\begin{proposition}
\label{lem:fisher}
Let $p(c \mid x)$ be a conditional likelihood that is twice continuously differentiable with respect to $x$. Define the Fisher information matrix with respect to $x$ as
\[
\mathcal{I}(x) := \mathbb{E}_{c \sim p(c \mid x)} \big[ \nabla_x \log p(c \mid x)\, \nabla_x \log p(c \mid x)^\top \big].
\]
Then $\mathcal{I}(x)$ satisfies the Fisher information identity:
\begin{equation}
\label{eq:fisher}
\mathcal{I}(x) = \mathbb{E}_{c \sim p(c \mid x)} \big[ -\nabla_x^2 \log p(c \mid x) \big].
\end{equation}
Moreover, by taking the diagonal terms:
\begin{equation*}
\begin{aligned}
&\mathbb{E}_{c\sim p(c\mid x)} \left[ \mathrm{diag}\big( - \nabla_x^2 \log p(x|c) + \nabla_x^2 \log p(x)\big)  \right] \\
&= \mathbb{E}_{c\sim p(c\mid x)} \left[ \big( \nabla_x \log p(x|c) - \nabla_x \log p(x) \big)^{\odot 2} \right].
\end{aligned}
\end{equation*}
where $\odot 2$ denotes element-wise squaring.
\end{proposition}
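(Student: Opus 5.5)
The plan is to run the classical Fisher-information argument with $x$ as the ``parameter'' and $c$ as the observation. Then Bayes' rule turns the conditional-likelihood statement into the score-difference statement.

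\textbf{Step 1: the score has zero mean.} Start from the normalization $\int p(c\mid x)\,dc = 1$, valid for every $x$ (a sum if $c$ is discrete). Differentiate once in $x$ and exchange derivative and integral. Since the problem has twice continuous differentiability, we additionally assume the integrability (dominated convergence) condition that licenses this exchange, as is standard. This gives
\[
\int \nabla_x p(c\mid x)\,dc = 0, \qquad\text{i.e.}\qquad \mathbb{E}_{c\sim p(c\mid x)}[\nabla_x\log p(c\mid x)]=0,
\]
using $\nabla_x p = p\,\nabla_x\log p$.

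\textbf{Step 2: differentiate again to get \eqref{eq:fisher}.} Differentiate $\int p(c\mid x)\nabla_x\log p(c\mid x)\,dc=0$ once more in $x$ and apply the product rule. This yields
\[
\int p(c\mid x)\left[\nabla_x^2\log p(c\mid x) + \nabla_x\log p(c\mid x)\nabla_x\log p(c\mid x)^\top\right]dc = 0,
\]
which is \eqref{eq:fisher} after rearranging. The main obstacle is justifying both interchanges of differentiation and integration. I would simply state the regularity assumption explicitly rather than derive it; for discrete, finitely many $c$ no assumption is needed.

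\textbf{Step 3: translate into score differences.} By Bayes, $\log p(c\mid x) = \log p(x\mid c) - \log p(x) + \log p(c)$, and $\log p(c)$ does not depend on $x$. Hence
\[
\nabla_x\log p(c\mid x) = \nabla_x\log p(x\mid c)-\nabla_x\log p(x),
\qquad
\nabla_x^2\log p(c\mid x) = \nabla_x^2\log p(x\mid c)-\nabla_x^2\log p(x).
\]
Substitute these into \eqref{eq:fisher}, then take the diagonal of both sides. Taking the diagonal is linear, so it commutes with the expectation. The diagonal of the outer product $vv^\top$ is $v^{\odot 2}$. This gives exactly the stated coordinate-wise identity.
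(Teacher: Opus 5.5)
Your proposal is correct and follows the same route as the paper's proof. You show the score has zero mean by differentiating the normalization, differentiate a second time using the product rule, and rearrange, with the interchange of differentiation and integration stated as an explicit assumption. Your Step 3 also writes out what the paper's appendix proof leaves implicit: Bayes' rule gives $\nabla_x\log p(c\mid x)=\nabla_x\log p(x\mid c)-\nabla_x\log p(x)$ (and likewise for the Hessian), and then $\mathrm{diag}(vv^\top)=v^{\odot 2}$ yields the coordinate-wise statement.
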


See Appendix~\ref{app:proof_fisher} for a proof. Based on this proposition, we define the squared score difference
\begin{equation}
    \Delta s_{\emptyset}^{t} := (s_{\theta}(x_{t},c) - s_{\theta}(x_{t}))^{\odot 2}
    \label{eq:dsu}
\end{equation}
as a computationally efficient proxy for the curvature difference $\Delta h_{\emptyset}^{t}$.
Crucially, this proxy becomes increasingly reliable at late sampling steps. 
As \(t \to 0\), \(x_t\) becomes increasingly informative about the specific condition under which it was generated. Consequently, replacing the expectation in Eq.~\ref{eq:fisher} with the generating condition becomes more accurate, making the score difference a faithful proxy of the curvature difference in the final stages of generation.

Our derivation reveals that Wen's metric (Eq.~\ref{eq:wen}) is essentially a spatially aggregated instance of our coordinate-wise curvature difference principle. 
Prior works primarily attributed the efficacy of Eq.~\ref{eq:wen} to the observation that memorized prompts exhibit \emph{abnormally strong text conditioning}, essentially interpreting the metric as a measure of guidance strength \citep{wen2024detecting, chen2025exploring}.
In contrast, our framework highlights the crucial role of the unconditional term as a geometric baseline. We interpret the unconditional model as a \emph{relatively underfitted baseline} that captures the intrinsic curvature of the data manifold. From this perspective, the subtraction in Eq.~\ref{eq:wen} does not merely measure conditioning strength, but rather filters out the intrinsic data complexity shared by both models. This effectively isolates the \emph{overfitting-driven curvature} induced purely by memorization, providing a geometric justification for the metric.

Finally, an analogous interpretation applies to the less-trained baseline $\tilde{\theta}$. We have $\nabla_{x_{t}} \log p(\theta|x_{t}, c) = \nabla_{x_{t}} \log p(x_{t}|\theta, c) - \nabla_{x_{t}} \log p(x_{t}|c)$. Here, the first term is the score of the fully trained target model, $s_{\theta}(x_{t}, c)$, and the second term is the marginal score representing the general data manifold, which we replace with the less-trained baseline $s_{\tilde{\theta}}(x_{t}, c)$. Applying Proposition~\ref{lem:fisher} to $p(\theta|x_{t}, c)$ motivates the following surrogate:
\begin{equation}
\Delta s_{\tilde{\theta}}^{t} := (s_{\theta}(x_{t},c) - s_{\tilde{\theta}}(x_{t},c))^{\odot 2}.
\label{eq:dsb}
\end{equation}

Intuitively, this score difference captures the additional gradient signal acquired during late-stage training; by filtering out the general data manifold captured by the less-trained baseline, it serves as a computationally efficient proxy for the curvature induced by overfitting.

The overall procedure for computing our four metrics ($\Delta h_\emptyset, \Delta h_{\tilde{\theta}}, \Delta s_\emptyset, \Delta s_{\tilde{\theta}}$) is summarized in Algorithm~\ref{alg:unified_extraction}.

\section{Experiments}
\label{sec:exp}
In this section, we present both qualitative and quantitative evaluations of the proposed metrics and compare them with the prior approach, Bright Ending (BE) \citep{chen2025exploring}, as well as the raw curvature $\mathrm{diag}(-H_\theta(x_t, c))$.
\paragraph{Setup.}
To evaluate our methods, we use Stable Diffusion (SD) v1.4 and v2.1 \citep{rombach2022high}. We also use SD v1.1 and SD v2.0 as less-trained versions of each model.
Following standard practice in memorization detection studies, we generate samples using the DDIM sampler with 50 inference steps and classifier-free guidance with guidance scale 7.5 for all experiments \citep{songdenoising, ho2022classifier}. 
We evaluate the four proposed metrics, $\Delta h_{\emptyset}^t$, $\Delta h_{\tilde{\theta}}^t$, $\Delta s_{\emptyset}^t$, and $\Delta s_{\tilde{\theta}}^t$, at the final sampling step, and use $K = 16$ Hutchinson samples. 
For notational simplicity, we omit the timestep superscript $t$ unless otherwise specified. 
For each metric, we aggregate values across the channel dimension by summation, yielding a single spatial map used for all qualitative and quantitative evaluations. 
More details are provided in Appendix~\ref{app:exp_detail}.

\subsection{Evaluation Using Ground-Truth Masks}
\label{sec:exp1}
As noted in the limitations of \citet{chen2025exploring}, their work primarily evaluates localization methods through \emph{extrinsic} criteria, such as improvements in memorization detection performance. While such evaluations demonstrate practical utility, they do not directly assess whether the predicted masks faithfully capture the spatial structure of local memorization itself. In this section, we therefore perform a direct, spatial evaluation against ground-truth memorization masks.

\paragraph{Construction of ground-truth masks.}
We adopt the ground-truth template masks provided by \citet{webster2023reproducible}, which are constructed by identifying spatial regions that remain invariant across near-duplicate training samples. We utilize the prompt--seed--template tuples identified by their pipeline, where images generated from potentially memorized prompts are matched to a template if their difference within the invariant regions---measured by masked mean squared error---falls below a strict threshold. This tuple-based identification allows us to strictly pair each generated sample with its corresponding template mask during evaluation. We categorize the ground-truth masks into three scenarios: (1) for template verbatims (TV), we use the specific invariant region masks defined by the matched template; (2) for global memorization (including matching and retrieved verbatims), we use all-ones masks; and (3) for non-memorized samples, we use all-zeros masks.

\paragraph{Evaluation.}
We generate metric maps for our proposed methods and the prior method Bright Ending (BE) \citep{chen2025exploring}, resizing them to $256\times256$ to match the resolution of the ground-truth masks. To ensure consistent thresholding across samples, we perform a global normalization to $[0,1]$ independently for each metric over all evaluation samples. We then report the best Intersection-over-Union (IoU) and Pixel Accuracy (ACC) obtained by sweeping a uniform threshold over the $[0,1]$ range, following standard segmentation evaluations \citep{kong2024interpretable, tang2023daam}. These IoU and pixel accuracy values are computed per sample and averaged over the evaluation set.

\paragraph{Qualitative Results.}
Qualitative comparisons are shown in Figure~\ref{fig:qual}, with additional examples in Figures~\ref{fig:v1_gt_eg}, and \ref{fig:v2_gt_eg} in Appendix~\ref{appendix:qual}.
Across a wide range of examples, our metrics consistently produce activation patterns that are well aligned with visually identifiable memorized regions. In contrast, BE frequently exhibits bright activations in non-memorized regions across multiple prompts and seeds. Overall, although both approaches are able to highlight memorized regions, our metric exhibits more consistent qualitative alignment with the ground-truth masks, whereas BE tends to over-activate in non-memorized areas. 

\paragraph{Quantitative Results.}
Table~\ref{tab:gt_mask_eval} summarizes the quantitative evaluation of localization accuracy using IoU and ACC against the ground-truth memorization masks.
Compared to the raw curvature $\mathrm{diag}(-H_\theta(x_t,c))$, the proposed curvature-difference metrics and their approximated versions achieve substantially higher IoU and ACC across all settings.
This improvement demonstrates that removing intrinsic data-driven curvature is essential for accurate localization.
Additional results in Appendix~\ref{appendix:realistic} demonstrate similar efficacy on Realistic Vision v5.1.

The relatively strong performance of Bright Ending (BE) on SD v2 can be partially attributed to dataset characteristics. Approximately 85\% of the memorized prompts in SD v2 begin with ``Shaw Floors'', which often produce images with visually simple floor regions, as illustrated in the first four rows of Figure~\ref{fig:v2_gt_eg}. BE tends to be effective primarily when the non-memorized regions are visually simple. In contrast, when the images contain complex structures or textures, BE frequently exhibits bright activations even in non-memorized regions, as shown in rows 5--9 of Figure~\ref{fig:v2_gt_eg}, indicating persistent false positives. This aligns with the observation by \citet{chen2024training} that the end token attention map tends to exhibit high values in foreground regions and low values in the background.

We also observe that the IoU of $\Delta s_{\tilde{\theta}}$ degrades in the SD v1.4 `All' setting, whereas it performs well in the `TV-only' setting. 
To diagnose this effect, we apply the TV-optimal threshold to non-memorized samples and find that nearly 90\% of them contain at least one outlier pixel whose activation exceeds the threshold, causing zero IoU for each non-memorized sample.
Compared to curvature-based metrics, such outlier activations occur more frequently for $\Delta s_{\tilde{\theta}}$, reflecting the fact that score-based quantities provide a noisier approximation to the underlying curvature. 
Crucially, this issue is easily mitigated: applying a simple $13 \times 13$ mean filter suffices to suppress these sparse outliers, \textbf{improving the IoU to 0.820 and the ACC to 0.931}.

\paragraph{Remark.}
It is worth noting that SD v2.0, used here as the less-trained baseline ($\tilde{\theta}$), is already known to exhibit memorization~\citep{webster2023reproducible}.
Nevertheless, our metric effectively localizes memorized regions in SD v2.1.
This implies that the probability landscape continues to sharpen (i.e., curvature increases) as the model undergoes further training, even after the initial onset of memorization. 
Further discussion on this phenomenon is provided in Appendix~\ref{appendix:baseline_analysis}.

\subsection{Memorization Detection}
\label{sec:exp2}
\begin{table}[!t]
\caption{
Memorization detection performance. The detection score is defined as the spatial expectation ($\mathbb{E}[\cdot]$) of the localization map, averaged across 4 random seeds per prompt. Results are reported as AUC / $\mathrm{TPR}@1\%\mathrm{FPR}$}
\label{tab:memorization_detection}
\centering
{
\renewcommand{\arraystretch}{1.2}
\begin{tabular}{lcc}
\toprule
Method & \textbf{SD v1.4} & \textbf{SD v2.1} \\
\hline
$\mathbb{E}\!\left[\text{BE-attention}\right]$                      & 0.886 / 0.390 & 0.945 / 0.877 \\
$\mathbb{E}\!\left[\mathrm{diag}(-H_\theta(x_t, c))\right]$                        & 0.861 / 0.082 & 0.775 / 0.000 \\
\hline
$\mathbb{E}\!\left[\Delta h_{\emptyset}\right]$                     & 0.997 / 0.982 & 0.995 / 0.950 \\
$\mathbb{E}\!\left[\Delta h_{\tilde{\theta}}\right]$                & 0.989 / 0.900 & 0.996 / 0.963 \\
$\mathbb{E}\!\left[\Delta s_{\emptyset}\right]$                     & 0.997 / 0.982 & \textbf{0.997} / \textbf{0.968} \\
$\mathbb{E}\!\left[\Delta s_{\tilde{\theta}}\right]$                & \textbf{0.998} / \textbf{0.988} & 0.993 / \textbf{0.968} \\
\bottomrule
\end{tabular}
}
\end{table}
If a metric accurately localizes memorized regions within a generated sample, aggregating it over spatial coordinates should naturally yield a strong signal for memorization detection. 
We therefore evaluate whether our localization metrics also serve as effective detection metrics when summed over spatial dimensions.
\paragraph{Detection Setup.}
We compare our four difference-based metrics with the raw curvature baseline $-H_\theta(x_t,c)$, and additionally report a purely attention-based baseline obtained by aggregating the BE attention map \citep{chen2025exploring}. For each generated sample, we aggregate the corresponding localization map by averaging over all spatial positions and use the resulting scalar as a detection score. 
Following prior works, we evaluate detection performance using 500 memorized prompts from SD v1 and 219 prompts for SD v2 identified by \citet{webster2023reproducible} and non-memorized prompts used in \citet{jeon2025understanding}, which are sourced from COCO \citep{lin2014microsoft}, Lexica \citep{lexica_dataset1}, Tuxemon \citep{tuxemon_dataset}, and GPT-4 \citep{achiam2023gpt}.
\paragraph{Results.}
The results are summarized in Table~\ref{tab:memorization_detection}. Across all experimental settings, the four difference-based metrics consistently outperform the raw curvature baseline, $\mathbb{E}\!\left[-H_\theta(x_t, c)\right]$. This demonstrates that subtracting an underfitted baseline is essential for reliable detection. Furthermore, the superior performance of our method compared to BE-attention suggests that our proposed metrics align more closely with the underlying mechanics of memorization.

Notably, the aggregated score-difference metrics yield slightly higher detection accuracy than the curvature-based metrics. Although score estimates exhibited noisier behavior at the coordinate level—as observed in the localization experiments—spatial averaging appears to effectively cancel out local variance. This allows the strong global signal inherent in the score difference to serve as a robust detector.

We also observe that the score difference using a less-trained baseline, $\mathbb{E}[\Delta s_{\tilde{\theta}}]$, achieves detection performance comparable to that of the unconditional baseline, $\mathbb{E}[\Delta s_{\emptyset}]$. This empirical result supports the geometric interpretation presented in Section~\ref{sec:approx}. Specifically, it validates that the effectiveness of score-difference metrics stems from subtracting an underfitted baseline. This operation filters out intrinsic data complexity, thereby isolating the curvature induced specifically by overfitting.
\section{Limitations and Future Work}
\label{sec:lim}
Our first limitation is that the curvature-based metrics ($\Delta h_\emptyset$, $\Delta h_{\tilde{\theta}}$) are more time-consuming than the prior work \citep{chen2025exploring}, as they require Hessian--vector products. However, as shown in Tables~\ref{tab:abl_ddu} and~\ref{tab:abl_ddb} in Appendix~\ref{appendix:quan}, even a single Hutchinson sample ($K=1$) already achieves competitive localization accuracy, with diminishing returns for larger $K$. Moreover, we introduce score-difference surrogates ($\Delta s_\emptyset$, $\Delta s_{\tilde{\theta}}$) that approximate curvature differences without explicitly computing Hessians.

Second, our formulation is explicitly designed to capture \emph{verbatim} memorization, characterized by a collapse of variability on a subset of spatial coordinates. As discussed in Section~\ref{sec:coord}, samples that exhibit \emph{concept-level} memorization (e.g., celebrities or artistic styles) may share the same global intrinsic dimensionality while distributing their degrees of freedom across the entire image. Such cases do not induce strongly localized curvature and are therefore not well captured by our metric. Extending our framework to distinguish and analyze concept-level memorization remains an important direction for future work.

\section{Conclusion}
\label{sec:conclusion}

In this work, we address two key gaps in diffusion model memorization research: the lack of an accurate, model-agnostic method for localizing memorized regions, and the limited understanding of why score or sharpness gaps—particularly relative to unconditional models—serve as effective memorization signals. We show that local memorization can be understood as coordinate-wise variance collapse, and that isolating overfitting-driven memorization requires subtracting intrinsic data-driven curvature using an underfitted baseline such as an unconditional or less-trained model. Building on this insight, we propose curvature-difference and score-difference frameworks that provide both principled spatial localization and a unified geometric explanation for existing memorization metrics.

\section*{Impact Statement}
This work enables more fine-grained analysis of memorization in diffusion models by localizing where memorization occurs within generated images. Such spatially resolved analysis can help better understand and diagnose privacy and copyright risks arising from partial reproduction of training data. By moving beyond global memorization scores, our approach provides more informative signals for analyzing memorization behavior in generative models.

\section*{Acknowledgments}
We thank the anonymous reviewers for insightful reviews. This work was partially supported
by Institute of Information \& communications Technology Planning \& Evaluation (IITP) grants
(RS-2020-II201373, Artificial Intelligence Graduate School Program (Hanyang University); RS-2023-002206284, Artificial intelligence for prediction of structure-based protein interaction reflecting physicochemical principles); the BK21 FOUR (Fostering Outstanding Universities for Research) project; NRF2024S1A5C3A02043653, Socio-Technological Solutions for Bridging the AI Divide: A Blockchain and Federated Learning-Based AI Training Data Platform) and Korea Institute for Advanced Study (KIAS) grant funded by the Korean government (MSIT).

\bibliography{main}
\bibliographystyle{icml2026}

\newpage
\appendix
\onecolumn
\section{Additional Qualitative Results}
\label{appendix:qual}
We provide additional qualitative results and visualization details for Figures~\ref{fig:qual}, \ref{fig:v1_gt_eg}, and \ref{fig:v2_gt_eg}. For each column, heatmaps are visualized using a shared color scale, where the minimum is set to the column-wise minimum value and the maximum is clipped at the 99th percentile. Additionally, for the curvature-difference metrics ($\Delta h$), negative values are clipped to zero.
\begin{figure}[H]
    \centering
    \includegraphics[width=\columnwidth]{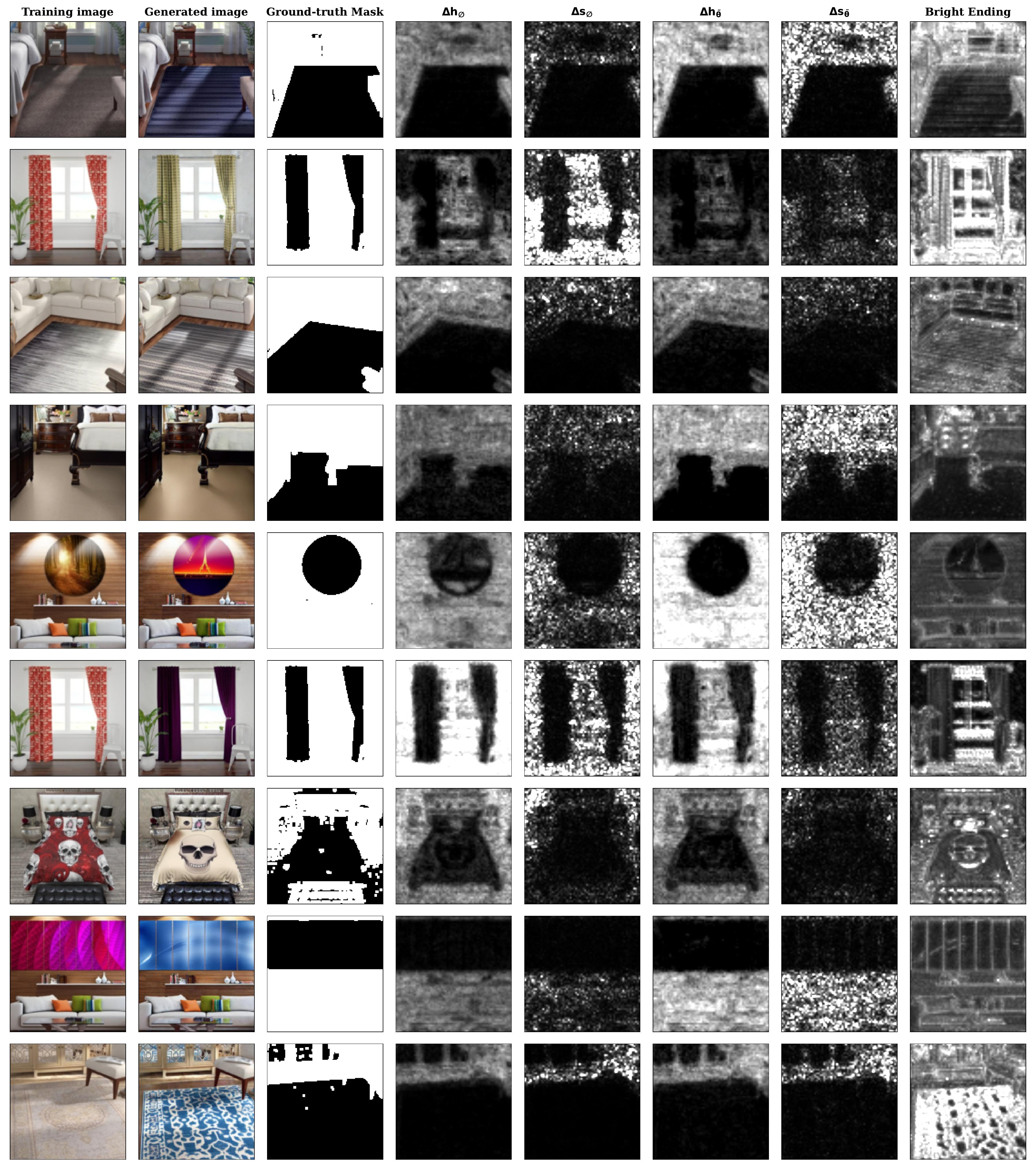}
    \caption{\textbf{Qualitative results for SD v1.4.}}
    \label{fig:v1_gt_eg}
\end{figure}
\begin{figure}[H]
    \centering
    \includegraphics[width=\columnwidth]{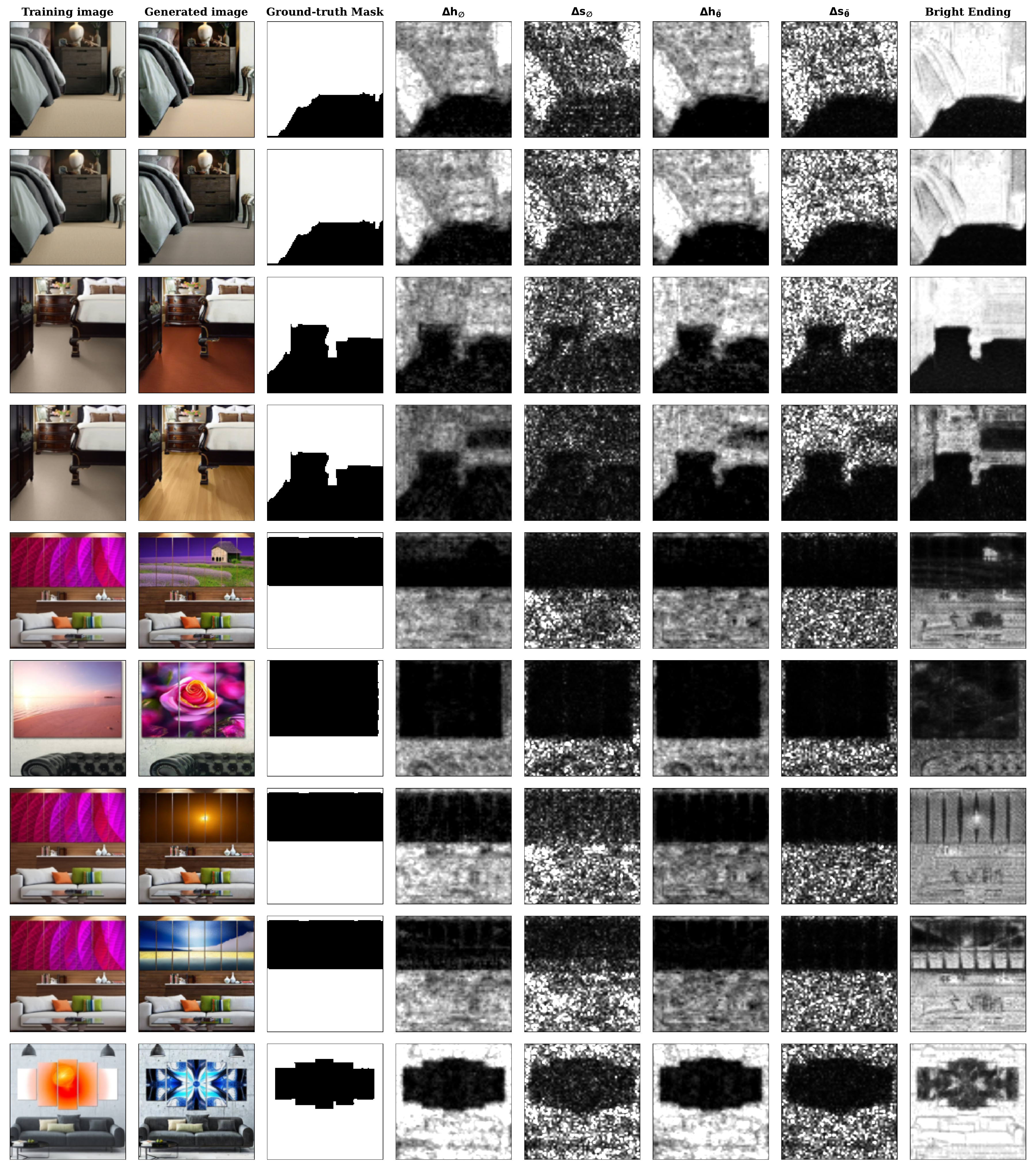}
    \caption{\textbf{Qualitative results for SD v2.1.}}
    \label{fig:v2_gt_eg}
\end{figure}
\newpage

\section{Extended Related Work - Mitigation}
\label{app:miti}
Studies on memorization in diffusion models generally address either detection or mitigation. Given our primary focus on detection, we deferred the discussion of mitigation strategies to this section to provide a comprehensive review of the literature. 

Existing mitigation strategies can be broadly categorized based on their point of intervention: conditioning prompt, guidance scale, initial noise, and model parameters. At the \textbf{prompt level}, the pioneering work of \citet{somepalli2023understanding} proposed replacing or adding random tokens to disrupt the prompt-image association, while \citet{wen2024detecting} optimized prompt embeddings to minimize their detection metric (Eq.~\ref{eq:wen}). Regarding the \textbf{guidance scale}, \citet{jain2025classifier} identified the "attraction basin" and suggested setting the guidance scale to zero or even applying opposite guidance during early sampling steps to prevent the trajectory from collapsing into memorized states. Concurrent studies on \textbf{initial noise} \citep{jeon2025understanding, han2026adjusting} interpret the score-difference-based metric as a measure of sharpness. They propose optimizing the initial noise to minimize this sharpness, thereby preventing semantic loss that often occurs when the prompt itself is modified. Finally, another line of research \citep{chavhan2024memorized, hintersdorf2024finding} has found that memorization is typically localized within specific \textbf{model parameters or neurons}, particularly in the value layers of cross-attention blocks. To address this, several works \citep{chavhan2024memorized, hintersdorf2024finding, ye2026provable, di2026demystifying} propose pruning or deactivating these localized weights to eliminate memorized content while preserving overall generation quality.

\section{Algorithm}
\label{appendix:alg}
We summarize the full procedure for $\Delta h_{\emptyset}, \Delta h_{\tilde{\theta}}, \Delta s_{\emptyset}, \Delta s_{\tilde{\theta}}$ in Algorithm \ref{alg:unified_extraction}. Specifically, it outlines how curvature-difference-based metrics ($\Delta h_{\emptyset}, \Delta h_{\tilde{\theta}}$) are computed efficiently using the Hutchinson trick, and how score-difference-based metrics ($\Delta s_{\emptyset}, \Delta s_{\tilde{\theta}}$) further reduce computational overhead.
\begin{algorithm}
\caption{Localizing Memorization via Coordinate-wise Curvature Differences}
\label{alg:unified_extraction}
\begin{algorithmic}[1]
\STATE \textbf{Input:} Prompt $c$, Target step $t^*$, Samples $K$, Models $\theta, \tilde{\theta}$, Baseline $B \in \{ \emptyset, \tilde{\theta} \}$, Method $M \in \{ \Delta s, \Delta h \}$
\STATE \textbf{Output:} Localization Map $\mathcal{M}$
\STATE \COMMENT{Sample $x_{t^*}$}
\STATE $x_T \sim \mathcal{N}(\mathbf{0}, \mathbf{I})$ 
\FOR{$t = T, T-1, \dots, t^* + 1$}
    \STATE $s_{\text{cfg}} \leftarrow s_{\theta}(x_t, t, \emptyset) + w \cdot (s_{\theta}(x_t, t, c) - s_{\theta}(x_t, t, \emptyset))$ \COMMENT{Sampling with CFG scale $w$}
    \STATE $x_{t-1} \leftarrow \text{SamplerStep}(x_t, t, s_{\text{cfg}})$ \COMMENT{e.g., DDIM}
\ENDFOR
\STATE \COMMENT{Define score difference at $t^*$:}
\IF{$B = \emptyset$}
    \STATE $s_{\text{diff}} \leftarrow s_{\theta}(x_{t^*}, t^*, c) - s_{\theta}(x_{t^*}, t^*, \emptyset)$ \COMMENT{Unconditional baseline}
\ELSIF{$B = \tilde{\theta}$}
    \STATE $s_{\text{diff}} \leftarrow s_{\theta}(x_{t^*}, t^*, c) - s_{\tilde{\theta}}(x_{t^*}, t^*, c)$ \COMMENT{Less-trained baseline}
\ENDIF
\STATE \COMMENT{Obtain Localization Map}
\IF{$M = \Delta s$}
    \STATE $\mathbf{V} \leftarrow s_{\text{diff}}^{\odot 2}$ \COMMENT{Score-difference surrogate}
\ELSIF{$M = \Delta h$}
    \STATE $\mathbf{d} \leftarrow \mathbf{0}$
    \FOR{$k = 1$ \textbf{to} $K$}
        \STATE $v_k \sim \mathcal{U}(\{-1, 1\}^{\text{dim}(x_{t^*})})$ \COMMENT{Rademacher random vector}
        \STATE $\mathbf{g} \leftarrow \nabla_{x_{t^*}} \left( s_{\text{diff}} \cdot v_k \right)$ 
        \COMMENT{Efficiently computed via VJP trick}
        \STATE $\mathbf{d} \leftarrow \mathbf{d} + v_k \odot \mathbf{g}$ \COMMENT{Hutchinson estimator (Appendix~\ref{appendix:hutchinson})}
    \ENDFOR
    \STATE $\mathbf{V} \leftarrow -\mathbf{d} / K$ \COMMENT{Coordinate-wise curvature difference}
\ENDIF
\STATE $\mathcal{M} \leftarrow \sum_{\text{channel}} \mathbf{V}$ \COMMENT{Spatial map via channel aggregation}
\STATE \textbf{return} $\mathcal{M}$
\end{algorithmic}
\end{algorithm}

\section{Proofs}
\subsection{Hutchinson estimator}
\label{appendix:hutchinson}
\begin{proposition}[Hutchinson trick for $\mathrm{diag}(A)$]
Let $A\in\mathbb{R}^{n\times n}$. Let $z\in\mathbb{R}^n$ be a random vector whose entries are i.i.d. Rademacher,
i.e., $\mathbb{P}(z_i=+1)=\mathbb{P}(z_i=-1)=\tfrac12$, independent across $i$.
Define the elementwise (Hadamard) product by $\odot$.
Then
\[
\mathbb{E}\bigl[z\odot(Az)\bigr] \;=\; \mathrm{diag}(A),
\]
i.e., $z\odot(Az)$ is an unbiased estimator of the diagonal of $A$.
Moreover, with i.i.d. samples $\{z^{(k)}\}_{k=1}^K$, the Monte Carlo estimator
\[
\widehat{\mathrm{diag}}(A)
\;:=\;
\frac1K\sum_{k=1}^K z^{(k)}\odot\bigl(Az^{(k)}\bigr)
\]
satisfies $\mathbb{E}[\widehat{\mathrm{diag}}(A)]=\mathrm{diag}(A)$.
\end{proposition}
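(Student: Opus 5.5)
The plan is to compute the expectation coordinate by coordinate and then use linearity for the Monte Carlo average. Fix an index $i\in\{1,\dots,n\}$. The $i$-th entry of the estimator is
\[
\bigl(z\odot(Az)\bigr)_i \;=\; z_i\sum_{j=1}^n A_{ij} z_j \;=\; A_{ii} z_i^2 + \sum_{j\neq i} A_{ij}\, z_i z_j .
\]
I will split this into a diagonal term and an off-diagonal term, and take expectations of each separately. Since $A$ is deterministic, linearity of expectation gives
\[
\mathbb{E}\bigl[(z\odot(Az))_i\bigr] = A_{ii}\,\mathbb{E}[z_i^2] + \sum_{j\neq i} A_{ij}\,\mathbb{E}[z_i z_j].
\]

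Next I evaluate the two moments using the Rademacher assumption. Because $z_i\in\{\pm1\}$, we have $z_i^2=1$ almost surely, so $\mathbb{E}[z_i^2]=1$. For $j\neq i$, independence gives $\mathbb{E}[z_iz_j]=\mathbb{E}[z_i]\,\mathbb{E}[z_j]$. Each mean is $0$, since $\mathbb{E}[z_i]=\tfrac12(+1)+\tfrac12(-1)=0$. Substituting, every off-diagonal term vanishes and $\mathbb{E}[(z\odot(Az))_i]=A_{ii}$. Since $i$ was arbitrary, $\mathbb{E}[z\odot(Az)]=\mathrm{diag}(A)$, where $\mathrm{diag}(A)$ is read as the vector of diagonal entries.

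For the Monte Carlo statement, each $z^{(k)}$ has the same distribution as $z$, so each summand has expectation $\mathrm{diag}(A)$ by the first part. Linearity then gives
\[
\mathbb{E}\bigl[\widehat{\mathrm{diag}}(A)\bigr]=\frac1K\sum_{k=1}^K \mathrm{diag}(A)=\mathrm{diag}(A).
\]
Independence across $k$ is not needed for unbiasedness; it only matters if one also wants the variance to decrease as $1/K$.

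There is no real obstacle here. The only step needing care is the off-diagonal cancellation, which uses both the zero mean of the entries and pairwise independence. Note that the result does not require $A$ to be symmetric. This matters for the paper's use, where $A$ is the Jacobian of a learned score difference and need not be exactly symmetric.
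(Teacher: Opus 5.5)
Your proof is correct and follows essentially the same route as the paper. You expand $(z\odot(Az))_i=\sum_j A_{ij}z_iz_j$, use $\mathbb{E}[z_iz_j]=\delta_{ij}$ from the Rademacher and independence assumptions, and obtain the Monte Carlo claim by linearity of expectation; your side remarks, that symmetry of $A$ and independence across $k$ are not needed for unbiasedness, are also accurate.
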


\begin{proof}
Fix an index $i\in\{1,\dots,n\}$. The $i$-th component of $z\odot(Az)$ is
\[
\bigl(z\odot(Az)\bigr)_i
=
z_i (Az)_i
=
z_i \sum_{j=1}^n A_{ij} z_j
=
\sum_{j=1}^n A_{ij}\, z_i z_j.
\]
Taking expectation and using linearity,
\[
\mathbb{E}\bigl[\bigl(z\odot(Az)\bigr)_i\bigr]
=
\sum_{j=1}^n A_{ij}\, \mathbb{E}[z_i z_j].
\]
Because $\{z_i\}$ are independent Rademacher variables, we have
\[
\mathbb{E}[z_i z_j]
=
\begin{cases}
\mathbb{E}[z_i^2]=1, & i=j,\\[4pt]
\mathbb{E}[z_i]\mathbb{E}[z_j]=0, & i\neq j,
\end{cases}
\]
since $\mathbb{E}[z_i]=0$.
Therefore only the term $j=i$ remains:
\[
\mathbb{E}\bigl[\bigl(z\odot(Az)\bigr)_i\bigr]
=
A_{ii}.
\]
Since this holds for every $i$, we conclude
\[
\mathbb{E}\bigl[z\odot(Az)\bigr]=\mathrm{diag}(A).
\]
Finally, the sample average $\widehat{\mathrm{diag}}(A)$ is also unbiased by linearity of expectation.
\end{proof}
\paragraph{Application to Curvature Differences}
We utilize the linearity of the Hutchinson estimator to efficiently compute the proposed curvature difference metrics.
Recall that $\Delta h_{\emptyset}$ is defined as the difference between the coordinate-wise curvatures of the conditional and unconditional models:
\[
\Delta h_{\emptyset} \;=\; \mathrm{diag}\bigl(-H_{\theta}(x_t, c)\bigr) - \mathrm{diag}\bigl(-H_{\theta}(x_t)\bigr) \;=\; \mathrm{diag}\bigl(H_{\theta}(x_t) - H_{\theta}(x_t, c)\bigr).
\]
By applying Proposition~\ref{appendix:hutchinson} with $A = H_{\theta}(x_t) - H_{\theta}(x_t, c)$, we can estimate this difference using shared random vectors $z$:
\[
\hat{\Delta} h_{\emptyset}
\;=\;
\frac1K\sum_{k=1}^K z^{(k)} \odot \bigl( (H_{\theta}(x_t) - H_{\theta}(x_t, c)) z^{(k)} \bigr).
\]
Using a shared $z$ for both terms allows for a coupled estimation that typically exhibits lower variance than estimating each term independently.
This derivation applies analogously to the less-trained baseline metric $\Delta h_{\tilde{\theta}}$, where the unconditional Hessian $H_{\theta}(x_t)$ is replaced by the less-trained conditional Hessian $H_{\tilde{\theta}}(x_t, c)$.



\subsection{Proof of Proposition~\ref{lem:cov}}
\label{app:proof_cov}
We provide a formal derivation of the relationship between the Hessian of the log-likelihood and the conditional covariance of the reverse process. Although this result is established in the literature \citep{kadkhodaiegeneralization, ouimproving, zhang2023moment}, we include the derivation here for completeness.

\paragraph{Statement.}Let the forward transition kernel be
\[
p(x_t \mid x_0)
=
\mathcal N\!\left(\alpha_t\,x_0,\,\sigma_t^2 I\right),
\]
and define the marginal
\[
p(x_t)=\int p(x_t\mid x_0)\,p(x_0)\,dx_0 .
\]
Then the conditional covariance satisfies
\[
\mathrm{Cov}[x_0\mid x_t]
=
\frac{1}{\alpha_t^2}
\left(
\sigma_t^4 \nabla_{x_t}^2\log p(x_t)
+
\sigma_t^2 I
\right).
\]
\begin{proof}
By Bayes' rule, the posterior distribution is given by $p(x_0 \mid x_t) = p(x_t \mid x_0)p(x_0) / p(x_t)$. The score function of the marginal density $p(x_t)$ can be expressed as:
\begin{align}
    \nabla_{x_t} \log p(x_t)
    &= \frac{\nabla_{x_t} p(x_t)}{p(x_t)} \nonumber \\
    &= \frac{1}{p(x_t)} \int \nabla_{x_t} p(x_t \mid x_0)\, p(x_0)\, \mathrm{d}x_0. \label{eq:score_def}
\end{align}
Since the forward transition kernel is Gaussian, $p(x_t \mid x_0) = \mathcal{N}(x_t; \alpha_t x_0, \sigma_t^2 I)$, its gradient with respect to $x_t$ is:
\begin{equation}
    \nabla_{x_t} p(x_t \mid x_0) = -\frac{x_t - \alpha_t x_0}{\sigma_t^2}\, p(x_t \mid x_0). \label{eq:gaussian_grad}
\end{equation}
Substituting \eqref{eq:gaussian_grad} into \eqref{eq:score_def}, we obtain Tweedie's formula for the posterior mean $\mathbb{E}[x_0 \mid x_t]$:
\begin{align}
    \nabla_{x_t} \log p(x_t)
    &= \frac{1}{p(x_t)} \int -\frac{x_t - \alpha_t x_0}{\sigma_t^2}\, p(x_t \mid x_0) p(x_0)\, \mathrm{d}x_0 \nonumber \\
    &= -\frac{x_t}{\sigma_t^2} + \frac{\alpha_t}{\sigma_t^2} \int x_0\, \frac{p(x_t \mid x_0)p(x_0)}{p(x_t)}\, \mathrm{d}x_0 \nonumber \\
    &= -\frac{x_t}{\sigma_t^2} + \frac{\alpha_t}{\sigma_t^2}\, \mathbb{E}[x_0 \mid x_t].
\end{align}
Rearranging this yields the first moment Tweedie's Formula:
\begin{equation}
    \mathbb{E}[x_0 \mid x_t] = \frac{1}{\alpha_t} \left( x_t + \sigma_t^2 \nabla_{x_t} \log p(x_t) \right). \label{eq:tweedie_mean}
\end{equation}

Next, we examine the Jacobian of the posterior mean. Differentiating \eqref{eq:tweedie_mean} with respect to $x_t$:
\begin{equation}
    \nabla_{x_t} \mathbb{E}[x_0 \mid x_t] = \frac{1}{\alpha_t} \left( I + \sigma_t^2 \nabla_{x_t}^2 \log p(x_t) \right). \label{eq:jacobian_mean}
\end{equation}

Alternatively, we can compute the Jacobian directly using the log-derivative trick. Note that 
\[
\nabla_{x_t} \log p(x_0 \mid x_t) = \frac{\alpha_t x_0 - x_t}{\sigma_t^2} - \nabla_{x_t} \log p(x_t).
\]
Then, taking the gradient of the expectation:
\begin{align}
    \nabla_{x_t} \mathbb{E}[x_0 \mid x_t]
    &= \nabla_{x_t} \int x_0\, p(x_0 \mid x_t)\, \mathrm{d}x_0 \nonumber \\
    &= \int x_0 (\nabla_{x_t} p(x_0 \mid x_t))^\top \, \mathrm{d}x_0 \nonumber \\
    &= \int x_0\, p(x_0 \mid x_t) (\nabla_{x_t} \log p(x_0 \mid x_t))^\top \, \mathrm{d}x_0 \nonumber \\
    &= \int x_0\, p(x_0 \mid x_t) \left( \frac{\alpha_t x_0 - x_t}{\sigma_t^2} - \nabla_{x_t} \log p(x_t) \right)^\top \mathrm{d}x_0.
\end{align}
Rearranging Eq. \eqref{eq:tweedie_mean}, we have $\nabla_{x_t} \log p(x_t) = \frac{\alpha_t \mathbb{E}[x_0 \mid x_t] - x_t}{\sigma_t^2}$. Substituting this relation into the integral simplifies the term inside to $\frac{\alpha_t}{\sigma_t^2}(x_0 - \mathbb{E}[x_0 \mid x_t])^\top$. Therefore:
\begin{align}
    \nabla_{x_t} \mathbb{E}[x_0 \mid x_t]
    &= \frac{\alpha_t}{\sigma_t^2} \int x_0 (x_0 - \mathbb{E}[x_0 \mid x_t])^\top p(x_0 \mid x_t)\, \mathrm{d}x_0 \nonumber \\
    &= \frac{\alpha_t}{\sigma_t^2} \int (x_0 - \mathbb{E}[x_0 \mid x_t] + \mathbb{E}[x_0 \mid x_t]) (x_0 - \mathbb{E}[x_0 \mid x_t])^\top p(x_0 \mid x_t)\, \mathrm{d}x_0 \nonumber \\
    &= \frac{\alpha_t}{\sigma_t^2} \left( \int (x_0 - \mathbb{E}[x_0 \mid x_t])(x_0 - \mathbb{E}[x_0 \mid x_t])^\top p(x_0 \mid x_t)\, \mathrm{d}x_0 \right) \nonumber \\
    &= \frac{\alpha_t}{\sigma_t^2}\, \mathrm{Cov}[x_0 \mid x_t]. \label{eq:jacobian_cov_relation}
\end{align}

Finally, by equating \eqref{eq:jacobian_mean} and \eqref{eq:jacobian_cov_relation}, we solve for the covariance:
\begin{align*}
    \frac{\alpha_t}{\sigma_t^2}\, \mathrm{Cov}[x_0 \mid x_t] &= \frac{1}{\alpha_t} \left( I + \sigma_t^2 \nabla_{x_t}^2 \log p(x_t) \right) \\
    \mathrm{Cov}[x_0 \mid x_t] &= \frac{\sigma_t^2}{\alpha_t^2} \left( I + \sigma_t^2 \nabla_{x_t}^2 \log p(x_t) \right) \\
    &= \frac{1}{\alpha_t^2} \left( \sigma_t^4 \nabla_{x_t}^2 \log p(x_t) + \sigma_t^2 I \right).
\end{align*}
\end{proof}

\subsection{Proof of Proposition~\ref{lem:fisher}}
\label{app:proof_fisher}
This derivation follows the standard proof of the Fisher information identity \citep{lehmann1998theory}; we include it here for completeness.
\paragraph{Statement.}
Let $p(c \mid x)$ be twice continuously differentiable with respect to $x$ and assume that differentiation and expectation can be interchanged. Define the Fisher information matrix by
\[
I(x) := \mathbb{E}_{c \sim p(c \mid x)}
\big[ \nabla_x \log p(c \mid x)\, \nabla_x \log p(c \mid x)^\top \big].
\]
Then,
\[
I(x) = - \mathbb{E}_{c \sim p(c \mid x)}
\big[ \nabla_x^2 \log p(c \mid x) \big].
\]

\begin{proof}
Define the score function
\[
s(c,x) := \nabla_x \log p(c \mid x).
\]
We first note that the expectation of the score vanishes:
\[
\mathbb{E}_{c \sim p(c \mid x)}[s(c,x)]
= \int \nabla_x \log p(c \mid x)\, p(c \mid x)\, dc
= \int \nabla_x p(c \mid x)\, dc
= \nabla_x \int p(c \mid x)\, dc
= \nabla_x 1
= 0.
\]

Taking the gradient of both sides with respect to $x$ yields
\[
0
= \nabla_x \mathbb{E}_{c \sim p(c \mid x)}[s(c,x)]
= \nabla_x \int s(c,x)\, p(c \mid x)\, dc
= \int \nabla_x \big( s(c,x)\, p(c \mid x) \big)\, dc,
\]
where we used the assumed regularity conditions to interchange differentiation and integration.
Applying the product rule, we obtain
\[
\nabla_x \big( s\, p \big)
= (\nabla_x s)\, p + s\, (\nabla_x p)^\top.
\]
Therefore,
\[
0
= \int (\nabla_x s(c,x))\, p(c \mid x)\, dc
\;+\;
\int s(c,x)\, \nabla_x p(c \mid x)^\top\, dc.
\]

The first term can be written as
\[
\int (\nabla_x s)\, p\, dc
= \mathbb{E}_{c \sim p(c \mid x)}
\big[ \nabla_x^2 \log p(c \mid x) \big].
\]
For the second term, using $\nabla_x p(c \mid x) = p(c \mid x)\, s(c,x)$, we have
\[
\int s(c,x)\, (\nabla_x p(c \mid x))^\top\, dc
= \int s(c,x)\, s(c,x)^\top\, p(c \mid x)\, dc
= \mathbb{E}_{c \sim p(c \mid x)}
\big[ s(c,x)\, s(c,x)^\top \big].
\]

Combining the two terms, we obtain
\[
0
= \mathbb{E}_{c \sim p(c \mid x)}
\big[ \nabla_x^2 \log p(c \mid x) \big]
+
\mathbb{E}_{c \sim p(c \mid x)}
\big[ s(c,x)\, s(c,x)^\top \big].
\]
Rearranging yields
\[
\mathbb{E}_{c \sim p(c \mid x)}
\big[ s(c,x)\, s(c,x)^\top \big]
=
- \mathbb{E}_{c \sim p(c \mid x)}
\big[ \nabla_x^2 \log p(c \mid x) \big].
\]
By definition of $I(x)$, this completes the proof.
\end{proof}
\section{Analysis of the Underfitted Baseline}
\label{appendix:baseline_analysis}

In this section, we provide a deeper analysis of the role of the underfitted baseline. Appendix~\ref{appendix:curvature_dynamics} utilizes a synthetic setup to explain the underlying curvature dynamics that justify subtracting a baseline, while Appendix~\ref{appendix:v1.2} empirically investigates how the choice of the baseline checkpoint affects localization performance on Stable Diffusion.

\subsection{Synthetic Experiment on Curvature Dynamics}
\label{appendix:curvature_dynamics}

In Section~\ref{sec:exp}, we observed that our curvature-difference metric effectively localizes memorized regions in SD v2.1, even when using SD v2.0—which is also known to exhibit memorization \citep{webster2023reproducible}—as the less-trained baseline $\tilde{\theta}$. In this section, we provide a \textit{partial explanation} for this phenomenon based on the curvature dynamics observed in our synthetic experiment. Note that the setup used in this section is analogous to the Stable Diffusion fine-tuning setup for memorization mitigation experiments in \citet{wen2024detecting} and \citet{somepalli2023understanding}.
\begin{figure}[t]
    \centering
    \includegraphics[width=\textwidth]{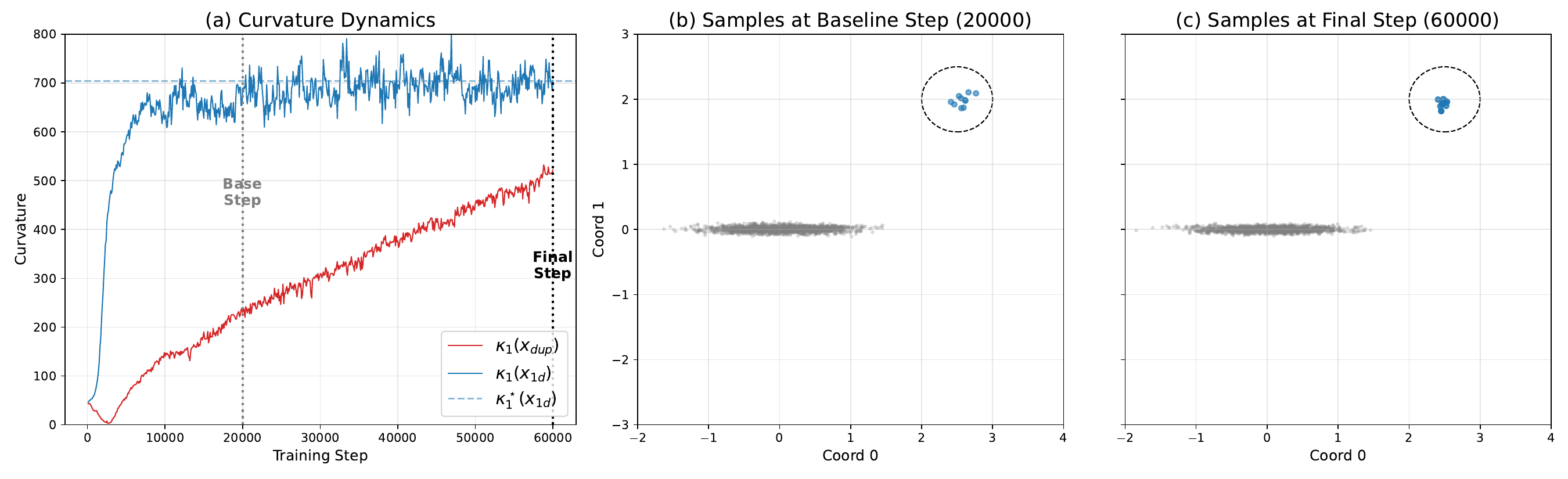}
    \caption{
    Synthetic experiment on curvature dynamics under progressive overfitting.
    (a) Training trajectories of the coordinate-wise curvature
    $\kappa_1(x)=(-\nabla_x s_\theta(x,t_{\mathrm{eval}}))_{11}$,
    measured at the duplicated outlier center $x_{\mathrm{dup}}$
    and at a representative rank-1 noisy manifold sample $x_{\mathrm{1d}}$.
    While $\kappa_1(x_{\mathrm{1d}})$ rapidly saturates at the ground-truth
    curvature $\kappa_1^\star$, $\kappa_1(x_{\mathrm{dup}})$ continues to increase
    throughout training.
    Notably, the model already generates samples from the outlier mode
    at the baseline checkpoint (20k updates), yet its curvature continues
    to grow significantly thereafter, indicating progressive sharpening.
    (b,c) Samples generated by the DDPM at the baseline checkpoint
    (20k updates) and the final checkpoint (60k updates), respectively.
    The outlier mode becomes progressively sharper, whereas the rank-1
    component remains unchanged.
    }
    \label{fig:synth_curvature}
\end{figure}
\paragraph{Experimental Setup.}
To validate our hypothesis that curvature strictly increases with progressive overfitting, we conduct a controlled experiment using a synthetic two-dimensional dataset ($N=10{,}000$, duplication ratio $\rho=0.5\%$), as shown in Figure~\ref{fig:synth_curvature}.
The majority ($99.5\%$) of samples are drawn from a rank-1 noisy manifold:
\begin{equation}
y_{\mathrm{1d}} = A z + \eta, \qquad
z \sim \mathcal{N}(0,1), \quad
\eta \sim \mathcal{N}(0,\sigma_{\mathrm{data}}^2 I_2),
\end{equation}
with $A = [0.5,\,0]^\top$ and $\sigma_{\mathrm{data}} = 3\times 10^{-2}$.
The remaining $0.5\%$ form a duplicated outlier cluster centered at
$x_{\mathrm{dup}}=(2.5,\,2.0)$, drawn from an isotropic Gaussian:
\begin{equation}
y_{\mathrm{dup}} \sim \mathcal{N}(x_{\mathrm{dup}}^\star, \sigma_{\mathrm{dup}}^2 I_2),
\end{equation}
where $\sigma_{\mathrm{dup}} = 10^{-5} \ll \sigma_{\mathrm{data}}$.

We train an unconditional DDPM with an MLP denoiser for $60{,}000$ updates. Let $\sigma_t=\sqrt{1-\bar{\alpha}_t}$ and $s_\theta(x_t,t)=-\epsilon_\theta(x_t,t)/\sigma_t$. At a fixed evaluation step $t_{\mathrm{eval}}=3$, we measure curvature only along the first coordinate:
\[
\kappa_1(x)
:= \bigl(-\nabla_x s_\theta(x,t_{\mathrm{eval}})\bigr)_{11},
\]
evaluated at both the outlier center $x_{\mathrm{dup}}$ and a representative 1d noisy manifold sample $x_{\mathrm{1d}}$.

\paragraph{Partial Explanation of Curvature Differences.}
Based on the results shown in Figure~\ref{fig:synth_curvature}, we attribute the effectiveness of our method to the differing curvature dynamics of DD-Mem and OD-Mem. We specifically highlight the interpretation of the curvature trajectory at the outlier mode:

\begin{enumerate}
    \item \textbf{DD-Mem saturates early:} General data manifolds typically possess intrinsic noise ($\sigma_{\mathrm{data}} > 0$) even if the data lies on a low-dimensional manifold \citep{fefferman2016testing}. Along the manifold direction, the marginal distribution $p_t(x)$ approximates a convolution of the data distribution $\mathcal{N}(0, \sigma_{\mathrm{data}}^2)$ and the diffusion noise kernel $\mathcal{N}(0, \sigma_t^2)$. Consequently, the curvature---which corresponds to the inverse variance for Gaussians---rapidly converges to the ground-truth curvature $\kappa_1^\star$:
    \begin{equation}
        \kappa_1^\star \approx (-\nabla^2 \log p_t(x))_{11} \approx \frac{1}{\sigma_{\mathrm{data}}^2 + \sigma_{t_{\mathrm{eval}}}^2}.
    \end{equation}
    Since both the less-trained baseline and the fully-trained model reach this saturation point early in training, the curvature difference approaches zero, effectively cancelling out the intrinsic data structure.

    \item \textbf{OD-Mem continues to sharpen:} As shown in Figure~\ref{fig:synth_curvature}, the model successfully generates samples from the outlier mode as early as the baseline checkpoint ($20{,}000$ steps). However, crucially, the curvature $\kappa_1$ at this mode does not stop increasing; it continues to rise significantly as training proceeds to the final step ($60{,}000$ steps). This indicates that memorization is a process of progressive sharpening. We attribute this primarily to the slow convergence due to data scarcity. Since the outliers constitute only a tiny fraction ($\rho=0.5\%$) of the training data, the model updates its estimate of the score in this region much less frequently. Furthermore, this prolonged convergence is feasible because the negligible intrinsic variance ($\sigma_{\mathrm{dup}} \ll \sigma_{\mathrm{data}}$) implies a vastly higher theoretical curvature bound, providing ample room for the model to continue sharpening the density well beyond the baseline checkpoint. Therefore, the fully trained model $\theta$ exhibits a significantly sharper landscape than the baseline $\tilde{\theta}$, yielding a large positive curvature difference.
\end{enumerate}
\begin{figure}[t]
    \centering
    \includegraphics[width=\textwidth]{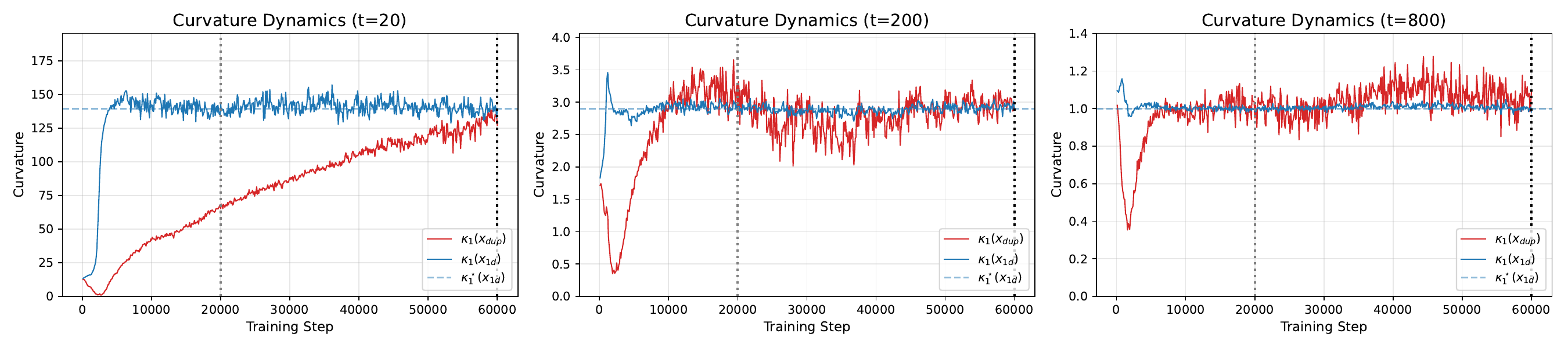}
    \caption{
    Synthetic experiment on Curvature dynamics of $\kappa_1$ at timesteps $t=20, 200, 800$.
    }
    \label{fig:synth_curvature_multi}
\end{figure}
\paragraph{Justification of using late sampling steps}
To justify the use of the final sampling step, we analyze how curvature dynamics vary across timesteps ($t=20, 200, 800$) in Figure~\ref{fig:synth_curvature_multi}. We observe that the data scarcity of the outlier ($\rho=0.5\%$) consistently leads to slower convergence compared to the 1d manifold, regardless of the timestep. However, at intermediate and high noise levels, the diffusion noise imposes a strictly low theoretical curvature bound ($\kappa \approx 1/\sigma_t^2$). Consequently, despite the slower learning rate, the memorized model quickly hits this low ceiling and saturates. This premature saturation masks the meaningful curvature growth between the baseline and final training steps. Crucially, at late sampling steps (e.g., $t=20$) where the noise bound is lifted, the model can continue to sharpen the density around the scarce outlier, rendering the overfitting-driven curvature divergence significantly more pronounced.

\subsection{Sensitivity to the Choice of the Baseline Model}
\label{appendix:v1.2}

To evaluate the robustness of our framework with respect to the choice of the less-trained baseline ($\tilde{\theta}$), we conduct an additional ablation study using Stable Diffusion v1.2 as an alternative baseline for the SD v1.4 target model. While SD v1.1 (used in our main experiments) represents an early pre-training stage, SD v1.2 has undergone further fine-tuning, making it a more trained model whose probability landscape is closer to that of the target.

\begin{table}[ht]
\centering
\small
\caption{Baseline checkpoint sensitivity for SD v1.4. We report the performance of difference-based metrics using a more trained model (SD v1.2) as the baseline.}
\begin{tabular}{lcc}
\toprule
\multirow{2}{*}{\textbf{Method}} & \multicolumn{2}{c}{\textbf{SD v1.4 (Baseline: SD v1.2)}} \\
\cmidrule{2-3}
& \textbf{TV only (IoU / ACC)} & \textbf{All (IoU / ACC)} \\
\midrule
$\Delta h_{\tilde{\theta}}$ & 0.875 / 0.918 & 0.854 / 0.955 \\
$\Delta s_{\tilde{\theta}}$ & 0.886 / 0.926 & 0.837 / 0.955 \\
\bottomrule
\end{tabular}
\label{tab:baseline_sensitivity}
\end{table}

As shown in Table~\ref{tab:baseline_sensitivity}, utilizing SD v1.2 as the baseline yields localization performance that remains highly competitive. As discussed in the remark in Section 5.1 and Appendix~\ref{appendix:curvature_dynamics}, this demonstrates that the baseline is not required to be strictly underfitted; it only needs to be relatively less overfitted than the target model for the curvature-difference metric to successfully capture incremental, overfitting-driven curvature.

However, while our access to intermediate checkpoints is limited, we conjecture that extreme baseline choices may weaken the efficacy of our difference-based methods. For instance, excessively early checkpoints might fail to capture the general data distribution, whereas checkpoints too close to the target may already capture some overfitting-driven curvature, thereby reducing the distinctiveness of the memorization signal.

\section{Additional Details on Experiments in Section~\ref{sec:exp}}
\label{app:exp_detail}
\subsection{Details of Less-Trained Baseline Models}
\label{app:less_trained_details}

In our experiments, we utilize Stable Diffusion (SD) v1.1 and SD v2.0 as less-trained baselines ($\tilde{\theta}$) for the target models SD v1.4 and SD v2.1, respectively. These baselines allow us to isolate the curvature induced by overfitting, as they share the same architecture and lineage but differ significantly in training duration and data exposure.

\paragraph{SD v1} Stable Diffusion v1.4\footnote{\url{https://huggingface.co/CompVis/stable-diffusion-v1-4}} is a direct continuation of the v1.1 training lineage, with a clearer emphasis on aesthetic quality. Stable Diffusion v1.1 (Baseline) corresponds to the initial pre-training stage and was trained for approximately \textbf{431,000 steps} on the LAION-2B dataset. Its primary objective was resolution scaling, progressing from $256 \times 256$ to $512 \times 512$, with minimal aesthetic-based filtering applied to the training data. In contrast, Stable Diffusion v1.4 (Target) resumes from the v1 series checkpoints and undergoes extensive fine-tuning through the v1.2 and v1.4 phases, accumulating approximately \textbf{740,000 additional steps}. This fine-tuning is performed on aesthetically filtered subsets of the original dataset.

\paragraph{SD v2} In the Stable Diffusion v2\footnote{\url{https://huggingface.co/Manojb/stable-diffusion-2-1-base}} series, the key distinction between versions arises from differences in safety filtering and the resulting amount of additional training. Stable Diffusion v2.0 (Baseline) was trained under a strict safety constraint ($p_{unsafe} < 0.1$), which ensured safer outputs. Stable Diffusion v2.1 (Target), on the other hand, fine-tunes the v2.0 checkpoint for an additional \textbf{210,000 steps} on the same dataset with a substantially relaxed safety filter ($p_{unsafe} < 0.98$).

\subsection{Details on ground-truth mask evaluation in Section~\ref{sec:exp1}}

\paragraph{Datasets.} 
The evaluation unit is a single generated image instance defined by a specific prompt-seed pair, where a single prompt may be associated with up to four seeds matching distinct templates. We assess performance in two scenarios: (1) \textbf{TV Only}, which uses all Template Verbatim instances (355 for SD v1.4, 590 for SD v2.1); and (2) \textbf{Combined (TV + MV + Nmem)}, which includes Memorized Visual and Non-memorized instances to evaluate discriminative power. In the Combined setting, we enforce class balancing by random subsampling to the minimum category size: for SD v1.4, classes are balanced to 231 instances each (total 693), and for SD v2.1, to 398 instances each (total 796) with no global memorization case. For non-memorized prompts, we used the same dataset as in the detection experiment in Section~\ref{sec:exp2}.

\paragraph{Evaluation.} We resize all metric maps to $256 \times 256$ using bilinear interpolation to match the resolution of the ground-truth masks. We perform a global normalization to $[0, 1]$ independently for each metric over all evaluation samples. We computed the best Intersection-over-Union (IoU) and Pixel Accuracy (ACC) by sweeping a uniform decision threshold $\tau$ over the $[0, 1]$ range with a step size of 0.001 (1001 steps).

\subsection{Implementation Details on Bright Ending \citep{chen2025exploring}}
We directly adopt the official codebase of \citet{chen2025exploring} to ensure exact reproduction of their method.
We extract the cross-attention maps from the down-sampling blocks of the U-Net at the final denoising step ($t=1$). Typically, the U-Net contains attention operations at multiple resolutions; we specifically select the maps with a spatial resolution of $64\times64$, matching the latent feature size. These maps are first averaged across all attention heads within each layer and then averaged across the selected layers to obtain a single spatial map. From this aggregated map, we extract the channel corresponding to the End-of-Sequence (EOS) token, as it is known to aggregate global semantic information. The resulting $64\times64$ map is upsampled to $256\times256$ to match the ground-truth mask.

\newpage
\section{Additional Experiment on Realistic Vision v5.1}
\label{appendix:realistic}

To further validate the generalizability of our proposed framework, we extend our evaluation to \textbf{Realistic Vision v5.1}~\footnote{\url{https://civitai.com/}}. We employ the same evaluation protocol as described in Section~\ref{sec:exp}, utilizing the ground-truth memorization masks.

\begin{table}[ht]
\caption{Ground-truth mask evaluation on Realistic Vision v5.1}
\centering
\begin{tabular}{l|cc|cc}
\toprule
 & \multicolumn{4}{c}{\textbf{Realistic Vision v5.1}} \\
\cline{2-5}
 & \multicolumn{2}{c|}{\textbf{TV only}} 
 & \multicolumn{2}{c}{\textbf{All}} \\
\cline{2-5}
Method & IoU & ACC & IoU & ACC \\
\hline
All-ones                     & 0.561 & 0.561 & 0.519 & 0.519 \\
\citet{chen2025exploring}    & 0.777 & 0.818 & 0.578 & 0.857 \\
\hline
$\Delta h_{\emptyset}$                     & 0.933 & 0.961 & \textbf{0.959} & \textbf{0.978} \\
$\Delta s_{\emptyset}$                  & 0.879 & 0.929 & 0.906 & 0.953 \\
$\Delta h_{\tilde{\theta}}$                        & \textbf{0.939} & \textbf{0.964} & 0.895 & 0.971 \\
$\Delta s_{\tilde{\theta}}$                  & 0.864 & 0.917 & 0.619 & 0.874 \\
\bottomrule
\end{tabular}
\end{table}

The quantitative results are summarized in the table above. Consistent with the observations in Section~\ref{sec:exp}, our curvature-difference and score-difference metrics generally outperform the prior attention-based method \citep{chen2025exploring}, in terms of both IoU and Pixel Accuracy. 

\section{Ablation Studies for timesteps and Hutchinson Samples}
Our ablation studies (Tables~\ref{tab:abl_ddu}--\ref{tab:abl_dsb}) show that localization performance consistently improves during the later stages of the sampling process. This is consistent with our analysis in Appendix~\ref{appendix:curvature_dynamics}, showing that the curvature gap effectively vanishes under high diffusion noise. Regarding the number of Hutchinson samples $K$, while larger values yield slight improvements, we observe that even a single sample ($K=1$) achieves competitive accuracy with diminishing returns. 

In Table~\ref{tab:hutchinson_overhead}, we also report the time overhead for estimating the curvature-based metrics ($\Delta h_{\emptyset}$, $\Delta h_{\tilde{\theta}}$) for a single generated image, as a function of the number of Hutchinson samples $K$. All measurements were conducted on a single NVIDIA L40S GPU (48GB). The increased computational cost for $\Delta h_{\tilde{\theta}}$ compared to $\Delta h_{\emptyset}$ is due to the backpropagation requiring traversal through two distinct U-Net models ($s_\theta$ and $s_{\tilde{\theta}}$). In contrast, $\Delta h_{\emptyset}$ involves backpropagation through only a single U-Net instance, allowing for more efficient computation.
\begin{table}[ht]
\centering
\caption{Computational Overhead by Number of Hutchinson Samples ($K$)}
\label{tab:hutchinson_overhead}
\begin{tabular}{c|l|c|ccccc}
\toprule
\multirow{2}{*}{\textbf{Model}} 
& \multirow{2}{*}{\textbf{Method}} 
& \textbf{Sampling Time} 
& \multicolumn{5}{c}{\textbf{Additional Time (Seconds)}} \\
 &  & \textbf{(50 steps)} 
 & \textbf{K=1} & \textbf{K=2} & \textbf{K=4} & \textbf{K=8} & \textbf{K=16} \\ 
\hline
\multirow{2}{*}{\textbf{SD v1.4}}
& $\Delta h_{\emptyset}$      
& \multirow{2}{*}{1.08} 
& +0.03 & +0.05 & +0.11 & +0.21 & +0.42 \\
& $\Delta h_{\tilde{\theta}}$ 
&                       
& +0.05 & +0.10 & +0.21 & +0.41 & +0.82 \\ 
\hline
\multirow{2}{*}{\textbf{SD v2.1}}
& $\Delta h_{\emptyset}$      
& \multirow{2}{*}{1.04} 
& +0.03 & +0.05 & +0.09 & +0.19 & +0.38 \\
& $\Delta h_{\tilde{\theta}}$ 
&                       
& +0.05 & +0.09 & +0.18 & +0.36 & +0.73 \\ 
\bottomrule
\end{tabular}
\end{table}


\label{appendix:quan}
\begin{table*}[ht]
\centering
\small
\caption{Ablation Studies for $\Delta h_{\emptyset}^t$}
{
\begin{tabular}{cc|cccc|cccc}
\toprule
 &  & \multicolumn{4}{c|}{SD v1.4} & \multicolumn{4}{c}{SD v2.1} \\
\cmidrule{3-10}
 &  & \multicolumn{2}{c}{TV only} & \multicolumn{2}{c|}{All}
    & \multicolumn{2}{c}{TV only} & \multicolumn{2}{c}{TV + Non-mem} \\
\cmidrule{3-10}
Timestep & K & IoU & ACC & IoU & ACC & IoU & ACC & IoU & ACC \\
\midrule
 - &  All-ones & 0.560 & 0.560 & 0.522 & 0.522 & 0.649 & 0.649 & 0.325 & 0.325 \\
 - &  All-zeros & 0.000 & 0.440 & 0.333 & 0.478 & 0.000 & 0.351 & 0.500 & 0.675 \\
\midrule
10 & 1 & \textbf{0.560} & \textbf{0.593} & \textbf{0.522} & 0.705 & \textbf{0.649} & \textbf{0.649} & \textbf{0.500} & 0.699 \\
 & 2 & \textbf{0.560} & 0.592 & \textbf{0.522} & 0.720 & \textbf{0.649} & \textbf{0.649} & \textbf{0.500} & 0.707 \\
 & 4 & \textbf{0.560} & 0.586 & \textbf{0.522} & 0.733 & \textbf{0.649} & \textbf{0.649} & \textbf{0.500} & 0.713 \\
 & 8 & \textbf{0.560} & 0.582 & \textbf{0.522} & 0.748 & \textbf{0.649} & \textbf{0.649} & \textbf{0.500} & 0.716 \\
 & 16 & \textbf{0.560} & 0.580 & \textbf{0.522} & \textbf{0.764} & \textbf{0.649} & \textbf{0.649} & \textbf{0.500} & \textbf{0.718} \\
\midrule
20 & 1 & \textbf{0.560} & 0.639 & \textbf{0.522} & 0.784 & \textbf{0.649} & \textbf{0.649} & \textbf{0.500} & 0.700 \\
 & 2 & \textbf{0.560} & 0.646 & \textbf{0.522} & 0.800 & \textbf{0.649} & \textbf{0.649} & \textbf{0.500} & 0.709 \\
 & 4 & \textbf{0.560} & 0.650 & \textbf{0.522} & 0.814 & \textbf{0.649} & \textbf{0.649} & \textbf{0.500} & 0.716 \\
 & 8 & \textbf{0.560} & 0.659 & \textbf{0.522} & 0.828 & \textbf{0.649} & \textbf{0.649} & \textbf{0.500} & 0.725 \\
 & 16 & \textbf{0.560} & \textbf{0.666} & \textbf{0.522} & \textbf{0.840} & \textbf{0.649} & \textbf{0.649} & \textbf{0.500} & \textbf{0.729} \\
\midrule
30 & 1 & 0.560 & 0.658 & 0.522 & 0.844 & 0.649 & 0.669 & \textbf{0.500} & 0.739 \\
 & 2 & 0.560 & \textbf{0.661} & 0.522 & 0.855 & 0.649 & 0.680 & \textbf{0.500} & 0.758 \\
 & 4 & 0.560 & 0.661 & 0.522 & 0.864 & 0.649 & 0.698 & \textbf{0.500} & 0.779 \\
 & 8 & 0.560 & 0.658 & 0.523 & 0.869 & 0.649 & 0.715 & \textbf{0.500} & 0.801 \\
 & 16 & \textbf{0.561} & 0.654 & \textbf{0.540} & \textbf{0.871} & \textbf{0.653} & \textbf{0.732} & \textbf{0.500} & \textbf{0.819} \\
\midrule
40 & 1 & 0.648 & 0.744 & 0.558 & 0.902 & 0.684 & 0.757 & \textbf{0.500} & 0.831 \\
 & 2 & 0.680 & 0.765 & 0.609 & 0.913 & 0.725 & 0.793 & \textbf{0.500} & 0.857 \\
 & 4 & 0.710 & 0.783 & 0.640 & 0.922 & 0.768 & 0.829 & \textbf{0.500} & 0.885 \\
 & 8 & 0.733 & 0.798 & 0.672 & 0.929 & 0.800 & 0.855 & \textbf{0.500} & 0.906 \\
 & 16 & \textbf{0.747} & \textbf{0.808} & \textbf{0.700} & \textbf{0.933} & \textbf{0.823} & \textbf{0.873} & \textbf{0.500} & \textbf{0.920} \\
\midrule
49 & 1 & 0.884 & 0.930 & 0.943 & 0.964 & 0.894 & 0.928 & 0.806 & 0.961 \\
 & 2 & 0.905 & 0.943 & 0.955 & 0.970 & 0.922 & 0.949 & 0.835 & 0.972 \\
 & 4 & 0.913 & 0.948 & 0.959 & 0.972 & 0.934 & 0.958 & 0.853 & 0.977 \\
 & 8 & 0.918 & 0.951 & 0.962 & 0.973 & 0.941 & 0.962 & 0.871 & 0.980 \\
 & 16 & \textbf{0.920} & \textbf{0.952} & \textbf{0.963} & \textbf{0.974} & \textbf{0.943} & \textbf{0.964} & \textbf{0.879} & \textbf{0.981} \\
\bottomrule
\end{tabular}
}
\label{tab:abl_ddu}
\end{table*}
\begin{table*}[!ht]
\centering
\small
\caption{Ablation Studies for $\Delta s_{\emptyset}^t$}
{
\begin{tabular}{c|cccc|cccc}
\toprule
 &
\multicolumn{4}{c|}{SD v1.4} &
\multicolumn{4}{c}{SD v2.1} \\
\cmidrule{2-5} \cmidrule{6-9}

Timestep &
\multicolumn{2}{c}{TV only} &
\multicolumn{2}{c|}{All} &
\multicolumn{2}{c}{TV only} &
\multicolumn{2}{c}{TV + Non-mem} \\
\cmidrule{2-3} \cmidrule{4-5}
\cmidrule{6-7} \cmidrule{8-9}

 & IoU & ACC & IoU & ACC &
IoU & ACC & IoU & ACC \\
\midrule
All-ones & 0.560 & 0.560 & 0.522 & 0.522 & 0.649 & 0.649 & 0.325 & 0.325 \\
All-zeros & 0.000 & 0.440 & 0.333 & 0.478 & 0.000 & 0.351 & 0.500 & 0.675 \\
\midrule
0 & 0.560 & 0.685 & 0.568 & 0.673 & 0.649 & 0.664 & 0.697 & 0.830 \\
10 & 0.560 & 0.704 & 0.522 & 0.761 & 0.678 & 0.783 & 0.500 & 0.820 \\
20 & 0.738 & 0.834 & 0.522 & 0.816 & 0.649 & 0.757 & 0.500 & 0.837 \\
30 & 0.716 & 0.821 & 0.562 & 0.917 & 0.649 & 0.649 & 0.500 & 0.790 \\
40 & 0.814 & 0.874 & 0.643 & 0.940 & \textbf{0.890} & \textbf{0.920} & 0.500 & \textbf{0.932} \\
49 & \textbf{0.837} & \textbf{0.902} & \textbf{0.927} & \textbf{0.953} & 0.784 & 0.840 & \textbf{0.806} & 0.920 \\
\bottomrule
\end{tabular}
}
\label{tab:abl_dsu}
\end{table*}
\begin{table*}[ht]
\centering
\small
\caption{Ablation Studies for $\Delta h_{\tilde{\theta}}^t$}
{
\begin{tabular}{cc|cccc|cccc}
\toprule
 &  & \multicolumn{4}{c|}{SD v1.4} & \multicolumn{4}{c}{SD v2.1} \\
\cmidrule{3-10}
 &  & \multicolumn{2}{c}{TV only} & \multicolumn{2}{c|}{All}    & \multicolumn{2}{c}{TV only} & \multicolumn{2}{c}{TV + Non-mem} \\
\cmidrule{3-10}
Timestep & K & IoU & ACC & IoU & ACC & IoU & ACC & IoU & ACC \\
\midrule
 - & All-ones & 0.560 & 0.560 & 0.522 & 0.522 & 0.649 & 0.649 & 0.325 & 0.325 \\
 - & All-zeros & 0.000 & 0.440 & 0.333 & 0.478 & 0.000 & 0.351 & 0.500 & 0.675 \\
\midrule
10 & 1 & 0.572 & 0.673 & 0.522 & 0.792 & \textbf{0.649} & 0.649 & \textbf{0.500} & 0.717 \\
 & 2 & 0.592 & 0.698 & 0.522 & 0.813 & \textbf{0.649} & 0.654 & \textbf{0.500} & 0.729 \\
 & 4 & 0.622 & 0.726 & 0.522 & 0.838 & \textbf{0.649} & 0.656 & \textbf{0.500} & 0.740 \\
 & 8 & 0.659 & 0.758 & 0.535 & 0.856 & \textbf{0.649} & 0.658 & \textbf{0.500} & 0.756 \\
 & 16 & \textbf{0.688} & \textbf{0.785} & \textbf{0.586} & \textbf{0.866} & \textbf{0.649} & \textbf{0.679} & \textbf{0.500} & \textbf{0.772} \\
\midrule
20 & 1 & 0.702 & 0.804 & 0.531 & 0.871 & \textbf{0.649} & \textbf{0.649} & \textbf{0.500} & 0.689 \\
 & 2 & 0.752 & 0.839 & 0.550 & 0.898 & \textbf{0.649} & \textbf{0.649} & \textbf{0.500} & 0.701 \\
 & 4 & 0.795 & 0.869 & 0.566 & 0.917 & \textbf{0.649} & \textbf{0.649} & \textbf{0.500} & 0.710 \\
 & 8 & 0.825 & 0.889 & 0.586 & 0.929 & \textbf{0.649} & \textbf{0.649} & \textbf{0.500} & 0.726 \\
 & 16 & \textbf{0.843} & \textbf{0.901} & \textbf{0.638} & \textbf{0.936} & \textbf{0.649} & \textbf{0.649} & \textbf{0.500} & \textbf{0.735} \\
\midrule
30 & 1 & 0.794 & 0.865 & 0.579 & 0.912 & 0.671 & 0.724 & \textbf{0.500} & 0.723 \\
 & 2 & 0.831 & 0.891 & 0.593 & 0.933 & 0.698 & 0.753 & \textbf{0.500} & 0.743 \\
 & 4 & 0.856 & 0.908 & 0.602 & 0.945 & 0.720 & 0.782 & \textbf{0.500} & 0.772 \\
 & 8 & 0.871 & 0.918 & 0.618 & 0.952 & 0.741 & 0.806 & \textbf{0.500} & 0.805 \\
 & 16 & \textbf{0.879} & \textbf{0.923} & \textbf{0.665} & \textbf{0.955} & \textbf{0.765} & \textbf{0.824} & \textbf{0.500} & \textbf{0.827} \\
\midrule
40 & 1 & 0.863 & 0.914 & 0.607 & 0.943 & 0.757 & 0.816 & \textbf{0.500} & 0.812 \\
 & 2 & 0.889 & 0.932 & 0.617 & 0.956 & 0.802 & 0.857 & \textbf{0.500} & 0.850 \\
 & 4 & 0.904 & 0.941 & 0.721 & 0.963 & 0.836 & 0.882 & \textbf{0.500} & 0.882 \\
 & 8 & 0.911 & 0.945 & 0.772 & 0.967 & 0.855 & 0.898 & \textbf{0.500} & 0.906 \\
 & 16 & \textbf{0.914} & \textbf{0.947} & \textbf{0.790} & \textbf{0.969} & \textbf{0.867} & \textbf{0.907} & \textbf{0.500} & \textbf{0.920} \\
\midrule
49 & 1 & 0.919 & 0.951 & 0.791 & 0.962 & 0.912 & 0.942 & 0.707 & 0.968 \\
 & 2 & 0.929 & 0.957 & 0.844 & 0.967 & 0.929 & 0.955 & 0.771 & 0.975 \\
 & 4 & 0.933 & 0.959 & 0.876 & 0.969 & 0.939 & 0.961 & 0.808 & 0.979 \\
 & 8 & 0.936 & 0.961 & \textbf{0.891} & \textbf{0.971} & 0.945 & 0.965 & 0.836 & 0.982 \\
 & 16 & \textbf{0.936} & \textbf{0.961} & 0.887 & 0.956 & \textbf{0.947} & \textbf{0.967} & \textbf{0.852} & \textbf{0.983} \\
\bottomrule
\end{tabular}
}
\label{tab:abl_ddb}
\end{table*}
\begin{table*}[!ht]
\centering
\small
\caption{Ablation Studies for $\Delta s_{\tilde{\theta}}^t$}
{
\begin{tabular}{c|cccc|cccc}
\toprule
 &
\multicolumn{4}{c|}{SD v1.4} &
\multicolumn{4}{c}{SD v2.1} \\
\cmidrule{2-5} \cmidrule{6-9}

Timestep &
\multicolumn{2}{c}{TV only} &
\multicolumn{2}{c|}{All} &
\multicolumn{2}{c}{TV only} &
\multicolumn{2}{c}{TV + Non-mem} \\
\cmidrule{2-3} \cmidrule{4-5}
\cmidrule{6-7} \cmidrule{8-9}

 & IoU & ACC & IoU & ACC &
IoU & ACC & IoU & ACC \\
\midrule
All-ones & 0.560 & 0.560 & 0.522 & 0.522 & 0.649 & 0.649 & 0.325 & 0.325 \\
All-zeros & 0.000 & 0.440 & 0.333 & 0.478 & 0.000 & 0.351 & 0.500 & 0.675 \\
\midrule
0 & 0.560 & 0.710 & 0.583 & 0.682 & 0.670 & 0.779 & 0.795 & 0.888 \\
10 & 0.836 & 0.893 & \textbf{0.710} & 0.907 & 0.688 & 0.792 & 0.500 & 0.897 \\
20 & 0.860 & 0.909 & 0.608 & \textbf{0.934} & 0.649 & 0.736 & 0.500 & 0.854 \\
30 & 0.876 & 0.920 & 0.601 & 0.934 & 0.807 & 0.867 & 0.500 & 0.904 \\
40 & \textbf{0.883} & \textbf{0.927} & 0.597 & 0.929 & 0.889 & 0.922 & 0.500 & 0.936 \\
49 & 0.874 & 0.924 & 0.651 & 0.909 & \textbf{0.920} & \textbf{0.947} & \textbf{0.880} & \textbf{0.974} \\
\bottomrule
\end{tabular}
}
\label{tab:abl_dsb}
\end{table*}


\end{document}